\newif\ifincludeappendixx
\definecolor{ckeyword}{HTML}{7F0055}
\definecolor{ccomment}{HTML}{3F7F5F}
\definecolor{cstring}{HTML}{2A0099}
\lstdefinestyle{numbers}{
	numbers=left,
	framexleftmargin=20pt,
	numberstyle=\tiny,
	firstnumber=auto,
	numbersep=1em,
	xleftmargin=2em
}
\lstdefinestyle{layout}{
	frame=none,
	captionpos=b,
}
\lstdefinestyle{comment-style}{
	morecomment=[l]//,
	morecomment=[s]{/*}{*/},
	commentstyle={\color{ccomment}\itshape},
}
\lstdefinestyle{string-style}{
	morestring=[b]",%
	morestring=[b]',%
	stringstyle={\color{cstring}},
	showstringspaces=false,%
}
\lstdefinestyle{keyword-style}{
	keywordstyle={\ttfamily\bfseries},
	morekeywords={
		function,
		constructor,
		int,
		bool,
		return,
		returns,
		uint
	},
	morekeywords = [2]{},
	keywordstyle = [2]{\text},
	sensitive=true,
}
\lstdefinestyle{input-encoding}{
	inputencoding=utf8,
	extendedchars=true,
	literate=
	{ℝ}{$\reals$}1%
	{→}{$\rightarrow$}1%
	{α}{$\alpha$}1%
	{β}{$\beta$}1%
	{λ}{$\lambda$}1%
	{θ}{$\theta$}1%
	{ϕ}{$\phi$}1%
}
\lstdefinestyle{escaping}{
	moredelim={**[is][\color{blue}]{\%}{\%}},
	escapechar=|,
	mathescape=true
}
\lstdefinestyle{default-style}{
	basicstyle=\fontencoding{T1}\ttfamily\footnotesize,
	style=numbers,
	style=layout,
	style=comment-style,
	style=string-style,
	style=keyword-style,
	style=input-encoding,
	style=escaping,
	tabsize=2,
	upquote=true
}
\lstdefinelanguage{BASIC}{
	language=C++,
	style=default-style
}[keywords,comments,strings]%
\theoremstyle{plain}
\newtheorem{theorem}{Theorem}[section]
\theoremstyle{definition}
\newtheorem{assumption}[theorem]{Assumption}
\theoremstyle{remark}
\def\eqref#1{equation~\ref{#1}}
\def\1{\bm{1}}
\def\vf{{\bm{f}}}
\def\vz{{\bm{z}}}
\def\mA{{\bm{A}}}
\def\mC{{\bm{C}}}
\def\mW{{\bm{W}}}
\def\mX{{\bm{X}}}
\def\mY{{\bm{Y}}}
\def\mZ{{\bm{Z}}}
\DeclareMathAlphabet{\mathsfit}{\encodingdefault}{\sfdefault}{m}{sl}
\SetMathAlphabet{\mathsfit}{bold}{\encodingdefault}{\sfdefault}{bx}{n}
\def\gB{{\mathcal{B}}}
\def\gC{{\mathcal{C}}}
\def\gG{{\mathcal{G}}}
\def\gN{{\mathcal{N}}}
\newcommand{\sigmoid}{\sigma}
\DeclareMathOperator{\rank}{rank}
\DeclareMathOperator{\RowSpan}{rowspan}
\DeclareMathOperator{\ColSpan}{colspan}
\DeclareMathOperator{\NullSpace}{null}
\DeclareMathOperator{\nullity}{nullity}
\newcommand{\grad}[1]{\ensuremath{\tfrac{\partial\mathcal{L}}{\partial #1}}}
\newcommand{\bv}{\mathcal{V}}
\newcommand{\metric}[1]{GSM-#1}
\newcommand{\tool}{GRAIN\xspace}
\newcommand{\toollong}{\emph{\textbf{G}raph \textbf{R}econstruction \textbf{A}lgorithm for \textbf{In}version of Gradients}\xspace}
\newcommand{\dist}[1]{\text{dist}{(#1)}}
\newcommand{\glue}[1]{\text{glue}{(#1)}}
\newcommand{\dang}[1]{\text{dang}{(#1)}}
\newcommand{\overlap}[1]{\text{overlap}{(#1)}}
\newcommand{\ext}[1]{\text{ext}{(#1)}}
\definecolor{darkspringgreen}{rgb}{0.09, 0.45, 0.27}
\renewcommand\theHALG@line{\thealgorithm.\arabic{ALG@line}}
\newcommand{\appref}[1]{%
	\ifbool{includeappendix}{\cref{#1}}{the appendix}%
}
\newcommand{\Appref}[1]{%
	\ifbool{includeappendix}{\cref{#1}}{The appendix}%
}
\crefname{equation}{Eq.}{Eq.}
\crefname{assumption}{Asm.}{Asm.}
\crefname{appendix}{App.}{App.}
\crefname{section}{Sec.}{Sec.}
\crefname{theorem}{Thm.}{Thm.}
\crefname{algorithm}{Alg.}{Alg.}
\crefname{corollary}{Cor.}{Cor.}
\crefname{table}{Tab.}{Tab.}
\title{GRAIN: Exact Graph Reconstruction from Gradients}
\author{Maria Drencheva$^{1}$, Ivo Petrov$^{1}$, Maximilian Baader$^{2}$, Dimitar I. Dimitrov$^{1,2}$, Martin Vechev$^{1,2}$\hfil\\
	\hspace{5em}
	$^{1}$ INSAIT, Sofia University "St. Kliment Ohridski"
	\hspace{2em}
	$^{2}$ ETH Zurich
	\hfil\\
	\texttt{\{maria.drencheva, ivo.petrov, dimitar.iliev.dimitrov\}@insait.ai} $^{1}$\hfil\\
	\texttt{\{mbaader,martin.vechev\}@inf.ethz.ch}  $^{2}$\hfil\\
}
\begin{document}
	
\maketitle

\vspace{-6mm}
\begin{abstract}
Federated learning claims to enable collaborative model training among multiple clients with data privacy by transmitting gradient updates instead of the actual client data. However, recent studies have shown the client privacy is still at risk due to the, so called, gradient inversion attacks which can precisely reconstruct clients' text and image data from the shared gradient updates. While these attacks demonstrate severe privacy risks for certain domains and architectures, the vulnerability of other commonly-used data types, such as graph-structured data, remain under-explored. To bridge this gap, we present \tool{}, the first exact gradient inversion attack on graph data in the honest-but-curious setting that recovers both the structure of the graph and the associated node features. Concretely, we focus on Graph Convolutional Networks (GCN) and Graph Attention Networks (GAT) -- two of the most widely used frameworks for learning on graphs. Our method first utilizes the low-rank structure of GNN gradients to efficiently reconstruct and filter the client subgraphs which are then joined to complete the input graph. We evaluate our approach on molecular, citation, and social network datasets using our novel metric. 
We show that \tool{} reconstructs up to $80\%$ of all graphs exactly, significantly outperforming the baseline, which achieves up to 20\% correctly positioned nodes.
\vspace{-3mm}
\end{abstract}

\section{Introduction}
Graph Neural Networks (GNNs)~\citep{gnn} have shown a great promise in learning on graph-structured data like social networks, traffic flows, molecules, as well as healthcare and income data. Many of these applications, however, require large quantities of private data, which can be hard to collect due to privacy regulations and the reluctance of users to share their data due to fear of losing competitive advantage. This has naturally led to widespread use of GCNs and GATs alongside Federated Learning (FL) which promises to protect the sensitive data of users~\citep{xie2021federated, zhang2021fastgnn, zhu2022federated, lee2022privacy, lou2021stfl, peng2022fedni}. 

However, the privacy of client data in FL in different domains including images~\citep{zhang2023generative}, text~\citep{petrov2024dager}, and tabular data~\citep{vero2023tableak} was recently severely violated by the introduction of gradient inversion attacks in the honest-but-curious setting. In these attacks, the FL server infers the client data based on passively observed client gradients and the models where they were computed. 
However, no prior work investigated the vulnerability of GNNs to such attacks.

\paragraph{This work: Gradient inversion attack on graphs} In this work, we introduce the first gradient inversion attack on graphs called \toollong(\textbf{\tool}), specifically designed to attack GNNs by recovering both the graph structure and the node features. 
At the core of \tool is an efficient filtering mechanism to correctly identify likely subgraphs, which are then combined to reconstruct the entire graph.
In particular, we leverage span checks to exploit the rank-deficiency of GNN layer updates and recover both the discrete set of per-layer node features and the subgraph adjacency matrices. 
We then reconstruct the client input using a depth-first search (DFS) algorithm to piece together the full graph from the recovered subgraphs.

We evaluate our attack on real-world chemical, citation and social network datasets, achieving reconstruction accuracies of up to 75\% (exact) and 85\% (partial) on chemical graph classification, 61\% in citation graph classification, and 66\% in molecular node classification tasks with known node labels. Finally, we demonstrate that data-dependent traversal strategies allow \tool to scale to significantly larger graphs, recovering 85\% of graphs with around 25 nodes.

\paragraph{Main Contributions} Our main contributions are:
\begin{itemize}	
    \item The first gradient inversion attack on Graph Neural Networks, recovering both the graph structure and the node features. We provide an efficient implementation on \href{https://github.com/insait-institute/GRAIN}{GitHub}.\footnote{\href{https://github.com/insait-institute/GRAIN}{https://github.com/insait-institute/GRAIN}}
    \item A generalization of the theory presented by~\citet{petrov2024dager} facilitating an effective mechanism to recover individual feature vectors and thus enabling the use on GNN layers to recover the graph connectivity via efficient filtering. 
    \item A novel set of metrics for measuring the quality of the recovered client graph structure and node features, enabling the evaluation of graph gradient inversion attacks at scale.
    \item A thorough evaluation of \tool showing FL with GNNs does not preserve the client data privacy in realistic applications, as \tool often recovers clients' graph data exactly.
    
\end{itemize}
We believe this work is an important step to further quantify the risks of using private data in FL.

\section{Related Work}\label{sec:related}

Gradient inversion attacks~\citep{zhu2019deep}, are attacks to Federated Learning that aim to infer the client's private data from the FL updates clients share with the federated server. As such, they assume knowledge of the updates themselves, as well as the model weights on which the updates were computed. Depending on the attack model, gradient inversion attacks are either malicious~\citep{cah, rtf, decepticons, panning, fishing} if the attacker can additionally manipulate the model weights sent to the clients, or honest-but-curious~\citep{zhu2019deep,analyticPhong, zhao2020idlg,geiping2020inverting, aaai, zhang2023generative, ggl,deng2021tag, lamp, spear, petrov2024dager, vero2023tableak} if the attack is executed passively by just observing model weights and updates.

In this work, we focus on the harder setting of honest-but-curious gradient inversion attacks. Most existing honest-but-curious attacks formulate gradient inversion as an optimization problem~\citep{zhao2020idlg,geiping2020inverting,nvidia,aaai, zhang2023generative, ggl,deng2021tag, lamp}  where the attacker tries to obtain the data which corresponds to a client update that matches the observed one best. While this approach is effective in many domains like images~\citep{geiping2020inverting,nvidia,aaai, zhang2023generative, ggl} where the client data is continuous, it has been shown that the associated optimization problem is much harder to solve for domains where client inputs are discrete. Some prior works have attempted to alleviate this issue by relying on various continuous relaxation~\citep{lamp,vero2023tableak} to the discrete optimization problem with some success. 

 In contrast to such approaches, recent research has demonstrated that exact gradient inversion is possible for both continuous~\citep{spear} and discrete inputs~\citep{petrov2024dager} in certain neural architectures. Notably, DAGER~\citep{petrov2024dager} showed that when dealing with a large but countable number of options for the client input data, the low-rank structure of gradient updates in fully connected layers can be leveraged to efficiently test all possibilities and identify the true input data. \tool{} extends this theory to GNN layers, exploiting the discrete nature of the unknown to the attacker adjacency matrix $\mA$ to simultaneously recover the client input features and graph structure, under the assumption of discrete input features. This addresses a critical challenge in graph-specific gradient inversion, where the interdependence between the recovery of the client feature matrix $\mX$ and the adjacency matrix $\mA$ renders traditional optimization-based attacks ineffective. Unlike DAGER, however, the structure of GNNs only allows for the recovery of the local graph structure using this approach. To overcome this, we further introduce a DFS-based algorithm that combines local graph structures into a single graph, enabling the recovery of the full client input data.

\section{Background and Notation}
\label{sec:background}

\paragraph{Threat Model}
	\tool{} is a honest-but-curious gradient inversion attack executed by a malicious FL server that aims to recover the clients' private data. As such, the server is assumed to know the weight updates sent to clients and the corresponding responses received from them. Following most existing gradient inversion attacks~\citep{deng2021tag,lamp,vero2023tableak,geiping2020inverting}, we also assume knowledge of the client data structure, including the semantic meaning, value ranges, and normalization of individual input features. This is well-justified as the server needs to enforce consistency in input representations across clients to ensure correct training.

As \tool{} represents the first gradient inversion attack on GNNs, it targets the most traditional federated protocol, FedSGD~\citep{mcmahan2017communication}, and the most commonly used GNN architectures --- GCN and GAT. Further, as \tool{} is based on our extension to~\cref{thm:spancheck_dager}, introduced by~\citet{petrov2024dager}, it makes two additional assumptions: (i) the number of nodes in the client graphs is smaller than the embedding dimension of the client GNN layers; and (ii) all input node features in the client graphs are discrete. We denote by $m$ the total number of features, by $\mathcal{F}_i$ the set of possible values for the $i$-th feature, and by $\mathcal{F} = \mathcal{F}_1 \times \mathcal{F}_2 \times \dots \times \mathcal{F}_m$ the set of all possible feature vectors. As we show in~\cref{sec:eval_top}, these assumptions cover many realistic use cases of GNNs.

\paragraph{Graph Terminology} Next, we introduce our graph notations. By $\bv$ we denote the set of possible graph nodes, where each node $v\in\bv$ is associated with a given feature vector. 
For an undirected graph $\mathcal{G} = (V, E)$ with node set $V\subset\bv$ of size $n = |V|$ and edge set $E$, we denote the degree of a node $v\in V$ with $\deg_\mathcal{G}(v)$. Further, for a pair of vertices $v_s, v_e \in V$, the distance $\dist{v_s, v_e}$ denotes the number of edges in the shortest path connecting $v_s$ to $v_e$. We introduce the notion of a $k$-hop neighborhood of a node $v$, defined by the subgraph $\mathcal{N}^k_\gG(v) = (V^k_v, E^k_v) \subset \mathcal{G}$ consisting of all nodes $V^k_v = \{v'\in V \mid \dist{v, v'} \leq k\}$ in the graph at a distance $\leq k$ from $v$ and the edges between them that can be traversed from $v$ in $\leq k$ steps $E^{k}_v = \{e=(v_1, v_2) \in E\mid v_1 \in V(\mathcal{N}^{k-1}_\gG(v)), v_2\in V^{k}_v\}$ with $\mathcal{N}^0_\gG(v)=\{v\}$. Finally, we will call a triplet $(V^k_v, E^k_v, v)$ associated with the $k$-hop neighborhood around $v$ in $\gG$ \emph{the building block $\gG_v^k$ with center $v$}.

\paragraph*{Graph Neural Networks}
Graph Neural Networks (GNNs) extend traditional neural networks to handle graph-structured data by leveraging the edges between nodes through message passing. Each GNN layer captures complex relationships between nodes by combining information from their neighbors and the graph's structural properties. This allows the model to learn richer node embeddings and gain insights into the graph's topology.
In particular, the $l^\text{th}$ GNN layer takes as an input a matrix $\mX^l \in \mathbb{R}^{n \times d}$ of $d$-dimensional node features for each node $v\in V$ and performs a combination of messages passing and non-linearity to produce the node features of the next layer $\mX^{l+1}$:
\begin{equation}
    \label{eq:gnn}
    \mX^{l+1} = \sigma(\mZ^l) = \sigma(\mA^l \mY^l) = \sigma\left(\mA^l \mX^l \mW^l \right),
\end{equation}
where $\mA^l \in \mathbb{R}^{n \times n}$ is a weighted adjacency matrix, $\mW^l \in \mathbb{R}^{d \times d'}$ is the weight matrix, $\mY^l \in \mathbb{R}^{n \times d'}$ is the output to the linear layer, and $\sigma$ is an activation function. For GCNs, the adjacency weights are calculated using the respective node degrees, while for GATs they are determined by the attention mechanism. We denote the input features matrix with $\mX^0$, and abuse the notation $\mX^l_v$ to denote the row of $\mX^l$ corresponding to the node $v\in V$. We consider $L$-layer GNNs, where we denote with $f_l$ the function that maps the input graph to the output of the $l^{\text{th}}$ layer for $l=0,1,2,\dots,L-1$.

\paragraph*{Gradient Filtering in Linear Layers}
Recently, DAGER~\citep{petrov2024dager} showed that one can leverage the gradients of the network loss $\mathcal{L}$ w.r.t. the weights $\mW^l$ of the $l^\text{th}$ linear layer $\grad{\mW^l}$ to search for the correct set of inputs $\mX^l$ to the layer among a discrete set of possibilities via filtering enabled by the low-rankness of the weight updates. We restate the theoretical findings below: 

\begin{theorem}
    \label{thm:spancheck_dager}
    If $n < d$ and if the matrix $\grad{\mY^l}$ is of full rank, then $\RowSpan(\mX^l) = \ColSpan(\grad{\mW^l})$.
\end{theorem}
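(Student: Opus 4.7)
The plan is to derive the stated span equality from a single application of the chain rule, followed by a dimension-counting argument using the full-rank hypothesis on $\grad{\mY^l}$.

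First I would write out the chain rule for the linear part of the GNN layer. Since $\mY^l = \mX^l \mW^l$ only acts linearly on $\mW^l$ through left multiplication by $\mX^l$, the standard matrix calculus identity gives
\begin{equation*}
    \grad{\mW^l} \;=\; (\mX^l)^{\!\top} \grad{\mY^l},
\end{equation*}
which is a $d \times d'$ matrix factored as a $d \times n$ times an $n \times d'$ matrix. This identity is the whole computational content; the rest is linear algebra about column spaces.

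Next I would establish the inclusion $\ColSpan(\grad{\mW^l}) \subseteq \RowSpan(\mX^l)$. Every column of $\grad{\mW^l}$ is of the form $(\mX^l)^{\!\top}\vv$ for some $\vv \in \R^{n}$ (namely, a column of $\grad{\mY^l}$), hence lies in $\ColSpan((\mX^l)^{\!\top}) = \RowSpan(\mX^l)$. Taking spans yields the inclusion.

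For the reverse inclusion I would use the full-rank assumption on $\grad{\mY^l}$. Under the hypothesis $n < d$ (and implicitly $n \leq d'$, which is needed for a matrix in $\R^{n \times d'}$ to be of full rank in the row-rank sense), full-rankness of $\grad{\mY^l}$ means that the linear map $\vv \mapsto \grad{\mY^l}\,\vv$ from $\R^{d'}$ onto $\R^{n}$ is surjective. Hence, for any $\vu \in \R^{n}$ there exists $\vv \in \R^{d'}$ with $\grad{\mY^l}\vv = \vu$, so
\begin{equation*}
    (\mX^l)^{\!\top}\vu \;=\; (\mX^l)^{\!\top}\grad{\mY^l}\vv \;=\; \grad{\mW^l}\,\vv \;\in\; \ColSpan(\grad{\mW^l}).
\end{equation*}
This shows $\ColSpan((\mX^l)^{\!\top}) \subseteq \ColSpan(\grad{\mW^l})$, giving the reverse inclusion and hence equality.

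The only real subtlety — and the step I would check most carefully — is the interpretation of ``full rank'' for the rectangular matrix $\grad{\mY^l} \in \R^{n \times d'}$; the argument requires full row rank, i.e., surjectivity as a map onto $\R^{n}$, which is consistent with the $n<d$ regime where $\mX^l$ itself can be taken to have row rank $n$. Apart from that, the proof is a direct factorization-plus-surjectivity argument and does not require additional machinery.
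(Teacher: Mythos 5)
Your proof is correct. It is worth noting that the paper does not actually prove this statement itself --- it restates it from DAGER and only proves the generalization, \cref{thm:spancheck_ours}, in \cref{app:proofs} --- so the comparison is with that related machinery. Both arguments start from the same factorization $\grad{\mW^l} = (\mX^l)^{\top}\grad{\mY^l}$, and the forward inclusion $\ColSpan(\grad{\mW^l}) \subseteq \RowSpan(\mX^l)$ is immediate either way. Where you diverge is in the reverse inclusion: you argue globally, using full row rank of $\grad{\mY^l}$ to get surjectivity of $\vv \mapsto \grad{\mY^l}\vv$ onto $\R^n$ and hence hit every $(\mX^l)^{\top}\vu$ at once, whereas the paper's proof of the generalization works row by row, characterizing when an individual $\mX^l_i$ lies in $\ColSpan(\grad{\mW^l})$ via the solvability of $\grad{\mY^l}\alpha_i = e_i$ and a null-space comparison. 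The paper's route buys the finer per-row statement (which rows survive when $\grad{\mY^l}$ is rank-deficient), which is what \tool{} actually needs; your route buys simplicity and, notably, avoids the additional assumption that $\mX^l$ itself be full rank, which the paper's Step 1 invokes to construct a right-inverse of $\mX^l$. Your closing caveat about ``full rank'' meaning full \emph{row} rank (so $n \le d'$) is exactly the right thing to flag --- the theorem is false under the column-rank reading --- and your observation is also correct that the hypothesis $n < d$ plays no role in the set equality itself; it only makes the equality useful by keeping $\RowSpan(\mX^l)$ a proper subspace of $\R^d$ so the span check can discriminate.
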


To verify whether an input vector $\vz$ can be a part of the client input, DAGER performs a spancheck by measuring the distance between $\vz$ and the subspace spanned by the column vectors of $\grad{\mW^l}$:
\begin{equation*}
    d(\vz, \grad{\mW^l}) \coloneq \| \vz - \text{proj}(\vz, \ColSpan(\grad{\mW^l})) \|_2.
\end{equation*}
We say $\vz$ can be a part of the $l$-th layer input if $d(\vz, \grad{\mW^l}) < \tau$ for a chosen threshold $\tau$. In our work, we will extend \cref{thm:spancheck_dager} to \cref{eq:gnn}, in particular applying it to the linear layer $\mY^l = \mX^l\mW^l$.

\section{Overview of \tool}
\begin{figure}
	\vspace{-9mm}
	\includegraphics[width=\textwidth]{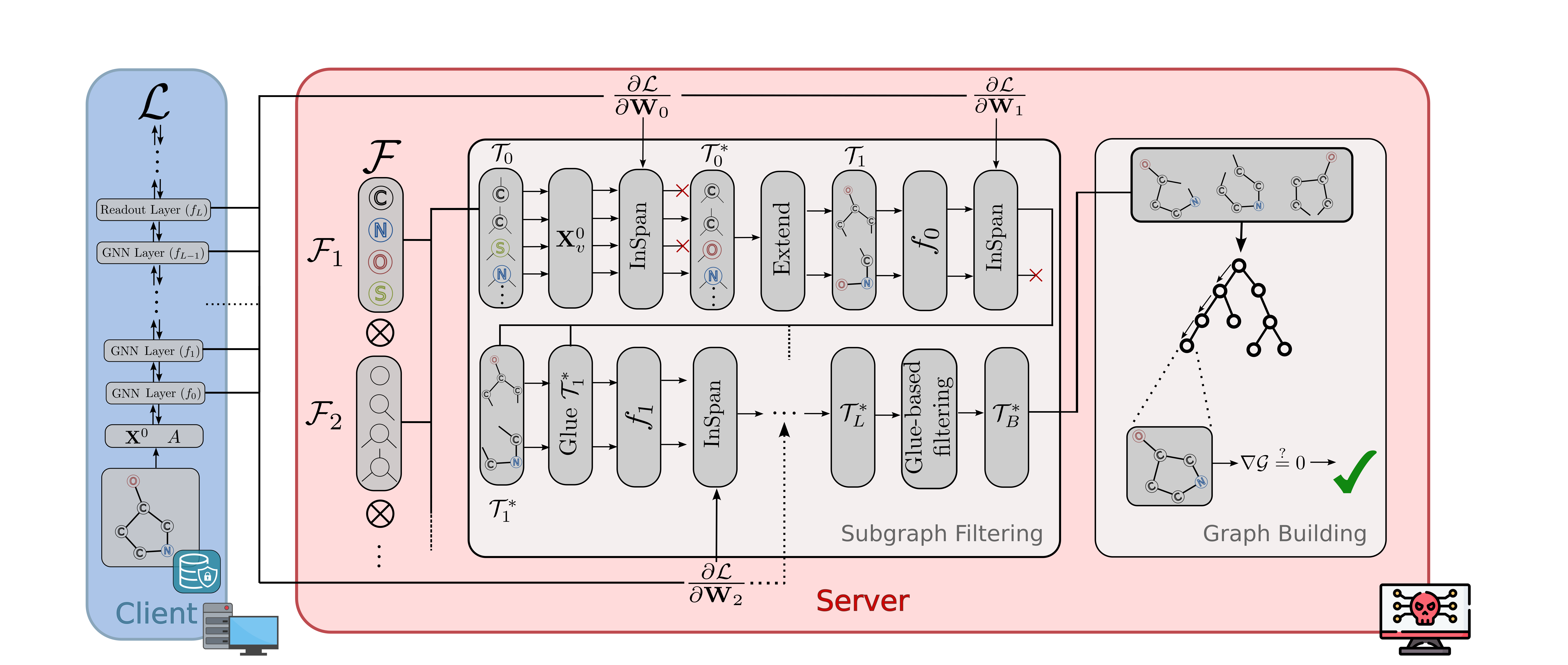}
	\vspace{-6mm}
	\caption{Overview of \tool . \tool first recovers the input nodes $\mathcal {T}_0^*$ by filtering through the cross-product $\mathcal {T}_0$ of all possible feature values, e.g., all atom types $\mathcal{F}_1$ and all number of bonds $\mathcal{F}_2$. It then iteratively combines and filters them into a set of larger building blocks $\mathcal {T}_B^*$ up to a degree $L$. Finally, it reconstructs the input graph by combining building blocks from $\mathcal {T}_B^*$ in a DFS manner.}	\label{fig:overview}
	\vspace{-4mm}
\end{figure}
Next, we provide a high-level overview of \tool, visualized in \cref{fig:overview}. \tool a gradient inversion attack designed to reconstruct graph-structured client training data in FL assuming an honest-but-curious adversary and is based on the key observation that due to the architecture of GNNs, the input embedding of a node $v$ at layer $l$ can be influenced only by the original input embeddings of the nodes in the $l$-hop neighborhood of $v$. Therefore, given any $l$-hop neighborhood, we can obtain the corresponding embedding of its center $v$ and use our spancheck, inspired by \cref{thm:spancheck_dager} and shown in \cref{thm:spancheck_ours}, to check whether the neighborhood is a possible subgraph of the input graph $\gG$ or not. 

\paragraph{Filtering} We leverage this subgraph checking procedure to create the filtering stage of \tool. In particular, we first generate the node proposal set $\mathcal{T}_0$ consisting of all nodes the can be obtained using the known to the attacker sets of possible feature values $\mathcal{F}_i$. We then apply the span checks layer-by-layer starting from $\mathcal{T}_0$. At each layer $l$, we "glue" the $(l-1)$-hop building blocks recovered from the previous filtering iteration into the set of possible $l$-hop neighbourhoods, denoted $\mathcal{T}_{l}$ in \cref{fig:overview}, which are then filtered using \cref{thm:spancheck_ours} to produce the set of consistent $l$-hop building blocks $\mathcal{T}_{l}^*$. Finally, at layer $L$ a final consistency check is performed to obtain the final set of $L$-hop building blocks $\mathcal{T}_B^*$.

\paragraph{Graph Building} In the second stage, we perform graph building where we combine the $L$-hop building blocks in $\mathcal{T}_B^*$ using a DFS-based approach to obtain the final graph reconstruction. To do this, at each node of the DFS tree, we "glue" a building block at a graph node that does not yet have enough neighbors to match their degree. Here, we use that the degree of a node is a widely used node feature for training GNNs 
\citep{hamilton2017inductive, xu2018powerful, 
	cui2022positional},
and is thus known by the attacker at this stage. When we cannot extend the graph further, we compute its gradient and compare it to the client gradient. If they do not match, we backtrack and try a different path. Otherwise, we terminate the DFS successfully and return the reconstructed graph.

\section{\tool : Exact Graph Reconstruction from Graidents}\label{sec:technical}
\begin{wrapfigure}[7]{r}{0.525\textwidth}
	\begin{minipage}{0.525\textwidth}
		\vspace{-9mm}
		\begin{algorithm}[H]
			\caption{The \tool algorithm}
			\label{alg:grain}
			\begin{algorithmic}[1]
				\Function{\tool}{$\mathcal{T}_{0}$, $\grad{\mW}$, $\tau$, $\vf$, $\mathcal{C}$} 
				\State $\mathcal{T}^{*}_{L}\gets$\Call{GenerateBBs}{$\mathcal{T}_{0}$, $\grad{\mW}$, $\tau$, $\vf$}

                \State $\mathcal{T}^*_B\gets$\Call{StructureFilter}{$\mathcal{T}_{L}^*$, $\grad{\mW}$}
				\State \Return \Call{ReconstructGraph}{$\mathcal{T}_{B}^*$, $\grad{\mW}$, $\mathcal{C}$}
				\EndFunction
			\end{algorithmic}
		\end{algorithm}
		\end{minipage}
\end{wrapfigure} 
We now present the technical details of \tool. 
First, in \cref{sec:gluing}, we explain the key operation of graph gluing. 
Then, in \cref{sec:spancheck}, we present \cref{thm:spancheck_ours} that adapts \cref{thm:spancheck_dager} to GNN layers and \cref{thm:sbgrprop} that enables \tool to locally recover graph structures. These theoretical developments allow for the efficient removal of proposal elements from $\mathcal{T}_l$, which fail the span check and hence cannot be a subgraph of the input, as we detail in \cref{sec:technical_filtering}.
Finally, in \cref{sec:technical_dfs} we demonstrate our graph building, which recovers the entire graph from the filtered set of possible subgraphs $\mathcal{T}_B^*$ using DFS. 

\subsection{Graph Gluing} 
\label{sec:gluing}
In this section we describe the process of gluing a $l$-hop building block $\gB = (V^B, E^B, c^B)$ to a graph $\gG = (V, E)$ at a vertex $c \in V$. The resulting set of graphs $\mathbb{G}$ contains all possible ways of attaching the non-overlapping parts of $\gB$ to $\gG$ at $c$, as shown in \cref{fig:glue}. To this end, we combine the 2 graphs by correctly matching equivalent nodes between them based on their features.
In particular, we return the empty set if the center of the building block $v'$ and the chosen node $v$ from the graph have different features. The same holds if the $l$-hop neighborhood of the chosen node does not match a subgraph of the building block. In all other cases we return the set of all possible graphs resulting from the gluing $\mathbb{G} = \glue{\gG, \gB, v}$. We describe how to efficiently perform gluing in \cref{app:gluing}.

\begin{wrapfigure}{R}{0.36\textwidth}
	\begin{minipage}{.36\textwidth}
        \begin{figure}[H]
            \vspace{-13mm}
            \includegraphics[width=\textwidth]{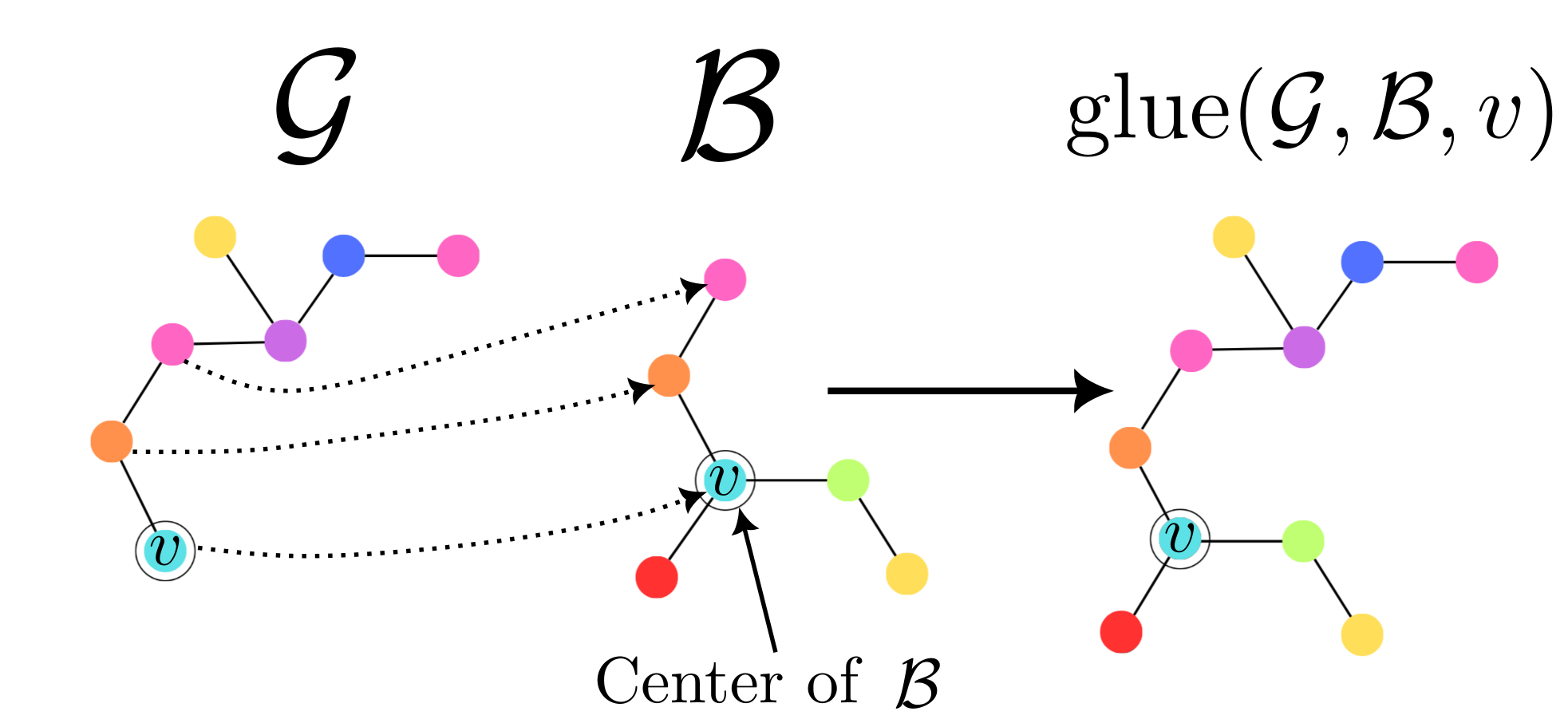}
            \vspace{-6mm}
            \caption{Glueing visualization}	
            \label{fig:glue}
        \end{figure}
        \vspace{-15mm}
    \end{minipage}
\end{wrapfigure}

\subsection{Theoretical Foundations of the Spancheck Filtering}
\label{sec:spancheck}
\paragraph{Span check for GNN layers} We now state our main result extending \cref{thm:spancheck_dager} to GNN layers. The proof is in \cref{app:proofs}.

\begin{restatable}{theorem}{spcheck}
	\label{thm:spancheck_ours}
	If $n < d$, $\mX^l_i \in \ColSpan(\grad{\mW^l})$ if and only if $\grad{\mY^l_i} \notin \RowSpan(\hat{\grad{\mY^l_i}})$, where $\hat{\grad{\mY^l_i}}$ denotes the matrix $\grad{\mY^l}$ with its $i$-th row removed.
\end{restatable}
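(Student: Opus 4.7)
The approach is to apply the chain rule to the linear sub-step $\mY^l = \mX^l\mW^l$ of the GNN layer, yielding $\grad{\mW^l} = (\mX^l)^\top \grad{\mY^l}$, and then translate everything into a condition about the standard basis vector $\ve_i \in \mathbb{R}^n$. The crucial identity is $(\mX^l_i)^\top = (\mX^l)^\top \ve_i$, which converts ``$\mX^l_i \in \ColSpan(\grad{\mW^l})$'' into the question of whether $\ve_i$ lies in $\ColSpan(\grad{\mY^l})$. A standard duality then matches this with the row-independence condition on the right-hand side.

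For the ``if'' direction, suppose $\grad{\mY^l_i} \notin \RowSpan(\hat{\grad{\mY^l_i}})$. Using $\ColSpan(\grad{\mY^l}) = \NullSpace((\grad{\mY^l})^\top)^\perp$, this is equivalent to $\ve_i \in \ColSpan(\grad{\mY^l})$: every $\vc \in \mathbb{R}^n$ with $\sum_j c_j\,\grad{\mY^l_j} = 0$ must have $c_i = 0$, i.e.\ $\ve_i \perp \vc$. Picking $\vv \in \mathbb{R}^{d'}$ with $\grad{\mY^l}\,\vv = \ve_i$ yields $\grad{\mW^l}\,\vv = (\mX^l)^\top \ve_i = (\mX^l_i)^\top$, establishing membership in the column span.

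For the ``only if'' direction, suppose $\grad{\mW^l}\,\vv = (\mX^l_i)^\top$ for some $\vv$. Then $(\mX^l)^\top(\grad{\mY^l}\,\vv - \ve_i) = 0$, so $\grad{\mY^l}\,\vv - \ve_i \in \NullSpace((\mX^l)^\top)$. The key step is to argue that under $n < d$ the rows $\mX^l_1, \ldots, \mX^l_n$ are linearly independent, so that $(\mX^l)^\top$ has trivial kernel; this forces $\grad{\mY^l}\,\vv = \ve_i$, i.e.\ $\ve_i \in \ColSpan(\grad{\mY^l})$, and the same orthogonal-complement identity used above recovers $\grad{\mY^l_i} \notin \RowSpan(\hat{\grad{\mY^l_i}})$.

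The main obstacle is exactly the linear-independence step in the ``only if'' direction: $n < d$ leaves room for the $n$ embedding rows to be independent, but does not by itself guarantee it (a dependent $\mX^l$ produces spurious elements of $\NullSpace((\mX^l)^\top)$ that decouple the two sides of the equivalence). This plays the analogous role to the full-rank hypothesis on $\grad{\mY^l}$ in \cref{thm:spancheck_dager} and should either be propagated from the discreteness of the initial features and the embedding dynamics, or stated as a genericity assumption on the prior layers' weights. Once this is handled, the remaining steps --- chain rule, outer-product rewriting $\grad{\mW^l} = \sum_j (\mX^l_j)^\top \grad{\mY^l_j}$, and the orthogonal-complement duality --- are routine linear algebra.
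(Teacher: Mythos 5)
Your proposal is correct and follows essentially the same route as the paper's proof: the identity $\grad{\mW^l} = (\mX^l)^\top\grad{\mY^l}$, the reduction of both sides to the condition $\ve_i \in \ColSpan(\grad{\mY^l})$, and the null-space/row-span duality (which the paper splits into two separate steps but which is the same argument). The linear-independence obstacle you flag in the ``only if'' direction is present in the paper's proof as well --- it assumes $\mX^l$ has full row rank at exactly that point and merely remarks that without it the true inputs still pass the span check while hallucinated ones may also pass.
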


This important generalization of the theory presented in DAGER provides an exact condition for recovering individual input vectors $\mX^l_i$ under any output gradient $\grad{\mY^l}$. In particular, when $\grad{\mY^l}$ is full-rank, the theorem recovers the statement of \cref{thm:spancheck_dager}. Otherwise, intuitively, the theorem states that we lose recoverability for inputs $\mX^l_i$ for which the corresponding row in $\grad{\mY^l}$ is linearly dependent of the rest of the rows in $\grad{\mY^l}$. The empricial experiments, illustrated in \cref{fig:gat_ablation} showcase that $\grad{\mY^l}$ is almost certainly full-rank for GATs, enabling the span check to accurately filter the entire input, making \cref{thm:spancheck_dager} still applicable most of the time. However, in general $\grad{\mY^l}$ can exhibit low-rankness, which limits the recovery of the entire input. In particular, \cref{fig:a_full_rankness} shows that for GCNs for small graphs the full-rankness assumption is violated. To this end, we present the following corollary (proven in \cref{app:proofs}) which outlines the implications of \cref{thm:spancheck_ours} specifically for GCNs:

\begin{restatable}{corollary}{fullrank}
	\label{clr:spancheck_full_rank}
	For $\grad{\mZ^l}$ of full-rank, $n < d$, if the (possibly normalized) adjacency matrix at layer $l$, $\mA \in \mathbb{R}^{n\times n}$, $\mX^l_i \in \ColSpan(\grad{\mW^l})$ if and only if $\mA^T_i \notin \ColSpan(\hat{\mA_i})$. Further, if $\mA$ is full-rank, then $\mX^l_i \in \ColSpan(\grad{\mW^l})$ for all $i=1,2,\dots, n$.
\end{restatable}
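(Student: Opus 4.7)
The plan is to reduce the condition from \cref{thm:spancheck_ours} on rows of $\grad{\mY^l}$ to an equivalent condition on columns of $\mA$, using the chain rule together with the full-rank hypothesis on $\grad{\mZ^l}$.

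First I would apply the chain rule to the identity $\mZ^l = \mA\,\mY^l$, obtaining $\grad{\mY^l} = \mA^T\,\grad{\mZ^l}$. Reading this row-by-row, the $i$-th row of $\grad{\mY^l}$ is a linear combination of the rows of $\grad{\mZ^l}$ whose coefficients are exactly the $i$-th column of $\mA$ (i.e.\ the $i$-th row of $\mA^T$), so that $\grad{\mY^l_i} = \sum_{j} \mA_{ji}\,\grad{\mZ^l_j}$.

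Next I would exploit that $\grad{\mZ^l}$ is full-rank, so its $n$ rows are linearly independent. This means any relation $\sum_{j} c_j\,\grad{\mZ^l_j} = 0$ forces $c=0$, and therefore every linear dependence among rows of $\grad{\mY^l}$ lifts bijectively to the same dependence among columns of $\mA$. Concretely, $\grad{\mY^l_i} \in \RowSpan(\hat{\grad{\mY^l_i}})$ holds if and only if the $i$-th column of $\mA$ lies in the span of the remaining columns, that is $\mA^T_i \in \ColSpan(\hat{\mA_i})$. Negating both sides and chaining the resulting equivalence with \cref{thm:spancheck_ours} yields the first part of the corollary.

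Finally, the second claim is immediate specialization: if $\mA$ is full-rank, its columns are linearly independent, so for no index $i$ can $\mA^T_i$ lie in $\ColSpan(\hat{\mA_i})$, and the equivalence above gives $\mX^l_i \in \ColSpan(\grad{\mW^l})$ for every $i=1,\ldots,n$. There is no serious obstacle here; the only delicate point is ensuring that the correspondence between row-dependencies of $\grad{\mY^l}$ and column-dependencies of $\mA$ is truly bijective, which is exactly what the full-rank hypothesis on $\grad{\mZ^l}$ provides (without it, spurious row-dependencies could be introduced when combinations of rows of $\grad{\mZ^l}$ collapse to zero).
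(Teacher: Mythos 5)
Your proposal is correct and follows essentially the same route as the paper's proof: both use $\grad{\mY^l} = \mA^T\grad{\mZ^l}$ and the full row rank of $\grad{\mZ^l}$ to show that row-dependencies of $\grad{\mY^l}$ correspond exactly to column-dependencies of $\mA$, then chain with \cref{thm:spancheck_ours}. The only cosmetic difference is that you invoke linear independence of the rows of $\grad{\mZ^l}$ directly where the paper multiplies by a right-inverse $\grad{\mZ}^{-R}$; these are the same argument.
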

i.e. when $\mA$ is full-rank, all feature vectors at layer $l$ are recoverable via the spancheck. If this isn't the case, according to the main theorem we still recover most, but not all. In \cref{fig:a_full_rankness} and \cref{app:ablation} we demonstrate that even when $\mA$ is significantly rank-deficient, the majority of $\mX^l$ can still be recovered under the GCN setting. Our experiments on real-world datasets show that even in these cases \tool is able to partially recover the graph. Next we explain how we propagate our proposed building blocks through the GNN layers in order to be able to apply \cref{thm:spancheck_ours} to filter them.

\begin{figure*}
	\vspace{-5mm}
    \centering
	\begin{subfigure}{0.233\textwidth}
		\centering
		\includegraphics[width=.98\linewidth]{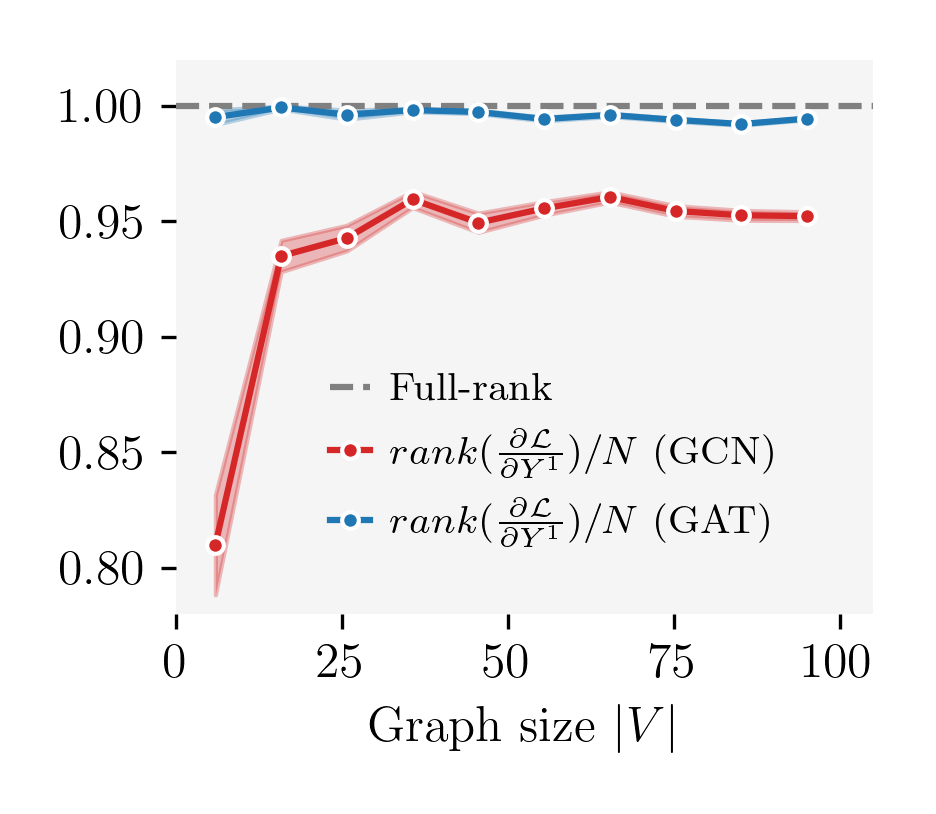}
		\caption{GCN vs GAT reconstructability (on Citeseer)}
		\label{fig:gat_ablation}
	\end{subfigure}
	\hspace{4mm}
	\begin {subfigure}{0.715\textwidth}
	\centering
		\includegraphics[width=.3\linewidth]{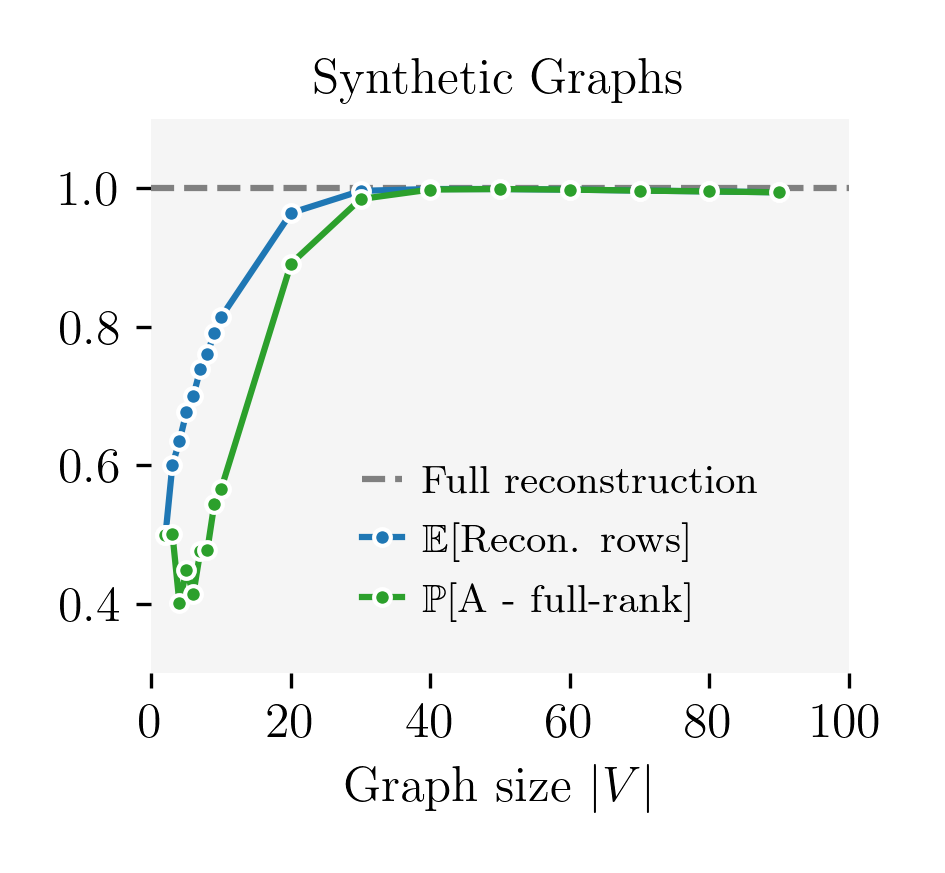}
		\includegraphics[width=.3\linewidth]{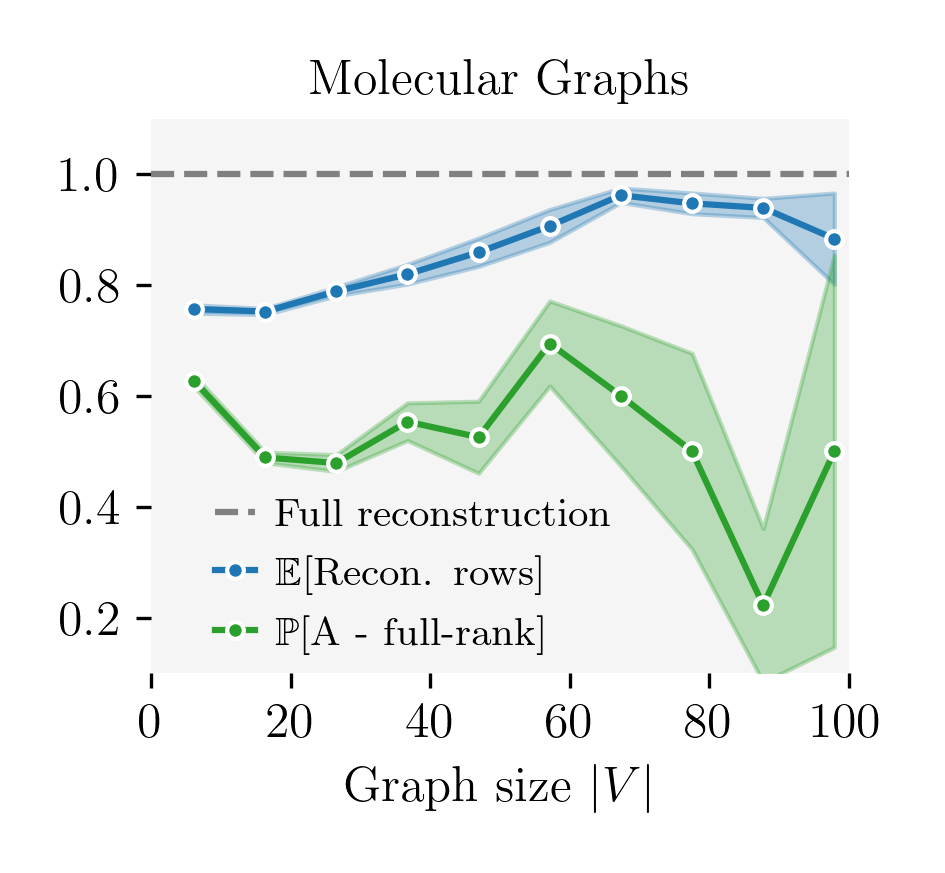}
		\includegraphics[width=.3\linewidth]{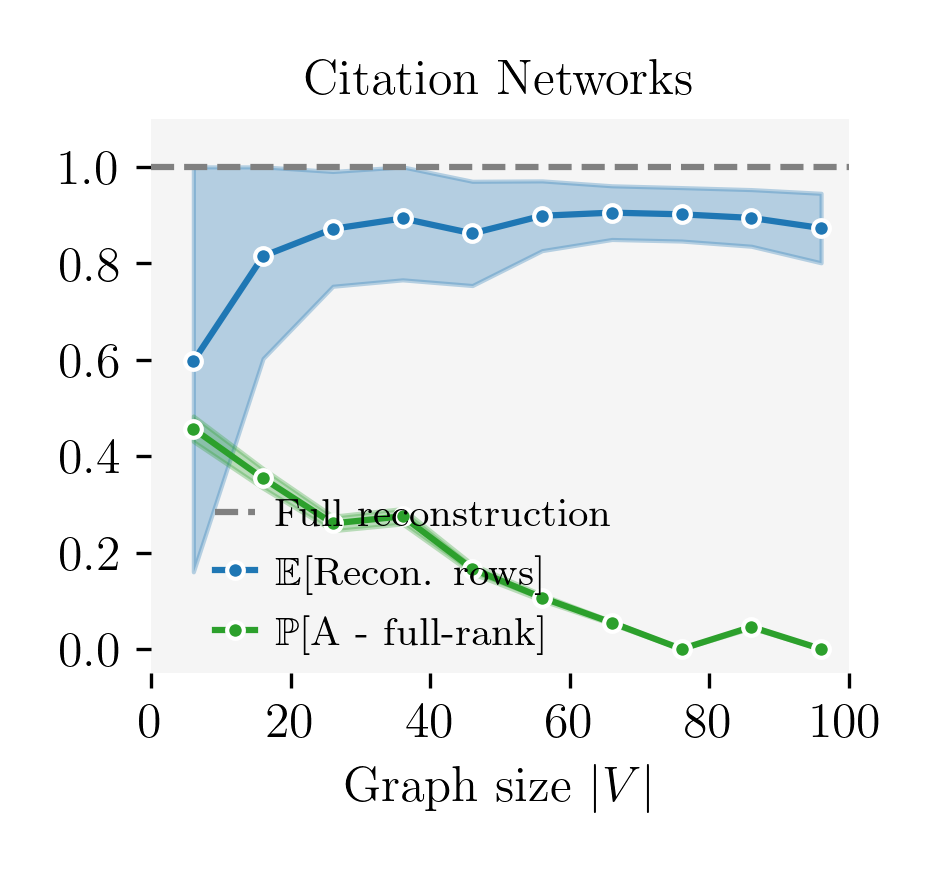}
		\caption{Impact of low-rankness of the adjacency matrix on reconstructability for GCNs for synthetic data (left), molecules (middle), and citation networks (right)}
		\label{fig:a_full_rankness}
	\end{subfigure}
	\vspace{-1mm}
	\caption{Ablation studies on how the data and architecture affect reconstructability}
	\vspace{-3mm}
\end{figure*}

\paragraph{Building block propagation}
In order to be able to apply \cref{thm:spancheck_ours} to filter a generated $l$-hop building block at the $l$-th layer we need to reconstruct its corresponding input embedding vector $\mX^l_j$ to the $l$-th layer. We now state our key result allowing us to recover this input vector.
\begin{restatable}{theorem}{sbgrprop}
	\label{thm:sbgrprop}
	For GNNs satisfying \cref{assmp:dependence}, propagating a correctly reconstructed building block $\gG^{l}_{v}$ centred at $v$ through the first $l$ GNN layers recovers the original embedding of $v$ at layer $l$:
	\[f_{l-1}(\gG^{l}_{v})[j] = X^{l}_i,\]
	where $i$ is the index of $v$ in the adjacency matrix of the original input graph $\gG$ and $j$ is the index of $v$ in the adjacency matrix of $\gG^l_v$.
\end{restatable}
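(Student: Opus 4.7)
The plan is to prove \cref{thm:sbgrprop} by induction on the depth of propagation, using the following strengthened inductive claim: for every $k = 0, 1, \dots, l$ and every node $w \in V^l_v$ with $\dist{v, w} \leq l - k$, the output of propagating $\gG^l_v$ through its first $k$ GNN layers, read off at the index of $w$ in $\gG^l_v$, equals the original embedding $\mX^k_{i_w}$ of $w$ in $\gG$, where $i_w$ denotes the index of $w$ in the adjacency matrix of $\gG$. Specialising to $k = l$ and $w = v$ then recovers exactly the statement of the theorem.

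The base case $k = 0$ is immediate: the building block $\gG^l_v$ carries the correct input features for every node by construction, so the layer-$0$ representations coincide. For the inductive step, suppose the claim holds for some $k < l$ and fix $w \in V^l_v$ with $\dist{v, w} \leq l - k - 1$. By \cref{eq:gnn}, the layer-$(k+1)$ embedding of $w$ is determined by the layer-$k$ embeddings of $w$ and each of its 1-hop neighbours, together with the row of $\mA^k$ indexed by $w$. The triangle inequality gives $\dist{v, u} \leq l - k$ for any 1-hop neighbour $u$ of $w$ in $\gG$, so $u \in V^l_v$; and since $w \in V^{l-k-1}_v \subseteq V^{l-1}_v$, the edge $(w, u)$ belongs to $E^l_v$. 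Consequently, the 1-hop neighbourhood of $w$ inside $\gG^l_v$ coincides with its 1-hop neighbourhood in $\gG$, and the inductive hypothesis supplies matching layer-$k$ embeddings at $w$ and all its neighbours. \cref{assmp:dependence} then guarantees that the row of $\mA^k$ anchored at $w$ is reconstructible from local information already present in $\gG^l_v$ (for GCNs via degrees carried as input features; for GATs via attention logits driven by the just-verified layer-$k$ embeddings). Substituting into \cref{eq:gnn} produces exactly $\mX^{k+1}_{i_w}$, closing the induction.

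The main difficulty in the argument is the aggregation-weight step. For GCNs the symmetric normalization involves the degrees of both endpoints of each edge, and the degree of a boundary node of $\gG^l_v$ (one at distance exactly $l$ from $v$) need not agree with its degree in $\gG$; for GATs the attention logits at layer $k$ depend on every 1-hop neighbour's layer-$k$ embedding, which can be trusted only at positions already covered by the inductive hypothesis. \cref{assmp:dependence} is precisely the hypothesis that handles both settings and ensures that the rows of $\mA^k$ attached to the interior nodes we care about are recoverable from strictly local data in $\gG^l_v$. Once this is granted, the remainder of the proof -- the strict distance reduction at each step, the presence of every required neighbour and edge in $E^l_v$, and the chaining of layer-$k$ values through the inductive hypothesis -- is routine bookkeeping.
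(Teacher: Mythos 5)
Your proof is correct and rests on the same underlying idea as the paper's --- the locality of message passing --- but you organize the induction differently. The paper argues backward from the output: it first shows by induction that the layer-$l$ embedding of $v$ depends only on the input embeddings of nodes in $\mathcal{N}^l_\gG(v)$, then constructs a surrogate graph $\hat{\gG}$ by deleting all edges touching nodes outside $\gG^l_v$ and zeroing their features, observes that the output of $f_{l-1}$ at $v$ is unchanged, and finally identifies the computation on $\hat{\gG}$ with the computation on $\gG^l_v$ up to reindexing and some extra embeddings for disconnected nodes. You instead run a forward induction directly on the building block, with the strengthened invariant that after $k$ layers every node within distance $l-k$ of $v$ carries its true embedding $\mX^k_{i_w}$. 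Your version is more self-contained (no intermediate graph $\hat{\gG}$) and makes explicit two points the paper's main proof glosses over: that the relevant edges genuinely lie in $E^l_v$ under its (slightly asymmetric) definition, and that for GCNs the symmetric normalization at a boundary node involves a degree that the building block cannot determine structurally. The paper only addresses the latter in its separate proof that GCNs satisfy \cref{assmp:dependence} (via degree features or enumeration from lower bounds), which is exactly where you also park it. One caveat: as literally stated, \cref{assmp:dependence} is only an independence claim about embeddings, so reading it as also guaranteeing that the rows of $\mA^k$ are recoverable from local data in $\gG^l_v$ is a mild over-interpretation --- but it is one the paper itself makes implicitly when it asserts that $f_{l-1}(\hat{\gG})$ agrees with $f_{l-1}(\gG)$ at $v$, so your proof is no weaker than the original on this point.
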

Intuitively, this shows that a correctly recovered $l$-hop neighborhood is sufficient to compute the embeddings of $v$ at layer $l$, enabling the span check filter at this layer. The proof is available in \cref{app:proofs}, alongside a proof that GCNs and GATs satisfy \cref{assmp:dependence}. 

\subsection{Subgraph filtering}
\label{sec:technical_filtering}

Next, we begin our attack with the creation of the $L$-hop building blocks and reduce the search space via the span check mechanism. Even though \tool is applicable to any type of discrete features, we describe our methodology for one-hot encoded ones for notational convenience.

\paragraph{Recovering the single node feature vectors in $\mathcal{T}_0^*$}
We now describe how to recover the single node client feature vectors, that is the 0-hop neighborhoods in $\mathcal{T}_0^*$. We note that the direct enumeration approach by DAGER is often infeasible for graph data. 
To this end, instead of performing span checks on the entire input vectors, we do partial span checks on partial vectors, iteratively adding dimensions in order to control the complexity. Specifically, we first take the dimensions corresponding to a subset of features and filter those. To further constrain the search space, we then keep adding dimensions corresponding other features, followed by another filtering. 
We note the soundness of this partial filtering is covered by \cref{thm:spancheck_ours}, as performing the spancheck on a subspace is less restrictive compared to the full span check. 
Pseudocode and further explanations are provided \cref{app:fbyf}.

\paragraph{Creating 1-hop building blocks $\mathcal{T}_1^*$}
\begin{wrapfigure}[10]{r}{0.43\textwidth}
	\begin{minipage}{0.43\textwidth}
		\vspace{-8mm}
		\begin{algorithm}[H]
			\caption{Filtering using the spancheck}
			\label{alg:spancheck}
			\begin{algorithmic}[1]
				\Function{Filter}{$\mathcal{T}_{l}$, $\grad{\mW^l}$, $\tau$, $f_{l-1}$} 
				\State $\mathcal{T}^{*}_{l}\gets\{\}$
				\For{$\gG$ in $\mathcal{T}_{l}$}
				\State $v\gets\text{center}(\gG)$
				\If{$d(f_{l-1}(\gG)_v, \grad{\mW^l}) < \tau$}
				\State $\mathcal{T}^{*}_{l}\gets \mathcal{T}^{*}_{l} \cup \{\gG\}$
				\EndIf
				\EndFor
				\State \Return $\mathcal{T}^{*}_{L}$ 
				\EndFunction
			\end{algorithmic}
		\end{algorithm}
		\end{minipage}
\end{wrapfigure}
We first define the extension $\ext{v}$ of a node $v$ to be the set of all 1-hop building blocks that can be constructed by attaching exactly $\deg(v)$ nodes from $\mathcal{T}_0^*$ to $v$.
We note that we do not attach nodes $w \in \mathcal{T}_0^*$ to $v$ if the feature degree of $w$ is $0$, that is $\deg(w) = 0$. 
The set of all possible 1-hop building blocks $\mathcal{T}_1 = \bigcup_{v \in \mathcal{T}_0^*} \ext{v}$ is then defined as the set of all possible 1-hop neighborhoods that can be constructed by extending node from $\mathcal{T}_0^*$.
\cref{thm:sbgrprop} then allows us to exactly recover the first layer embedding for the center of each neighborhood. We are hence able to filter $\mathcal{T}_1$ by applying \cref{thm:spancheck_ours} on $\grad{\mW^1}$ to achieve the reduced set of 1-hop building blocks $\mathcal{T}_1^*$, as shown in \cref{alg:spancheck}.

\begin{wrapfigure}[21]{r}{0.56\textwidth}
	\vspace{-9mm}
	\begin{minipage}{0.56\textwidth}
	\begin{algorithm}[H]
		\caption{Creating the degree-$L$ building blocks}
		\label{alg:createbbs}
		\begin{algorithmic}[1]
			\Function{GenerateBBs}{$\{\mathcal{F}_i\}_{i\in\{[1,f]\}},\grad{\mW},\tau, \vf$} 
			\State $\mathcal{T}^{*}_{0}\leftarrow$ \Call{FilterNodes}{$\{\mathcal{F}_i\}$, $\grad{\mW_0}$, $\tau$}
			\State $\mathcal{T}_{1}\leftarrow\{\}$
			\For{$v$ in $\mathcal{T}^{*}_{0}$}
				\State $\mathcal{T}_{1}\leftarrow\mathcal{T}_{1} \cup \ext{v,\mathcal{T}_{0}^*}$
			\EndFor
			\State $\mathcal{T}^{*}_{1}\leftarrow$\Call{Filter}{$\mathcal{T}_{1},\grad{\mW_{1}},\tau,f_0$}
			\For{$l \gets 1, \dots, L-1$}
				\State $\mathcal{T}_{l+1}\leftarrow\{\}$
				\For{$G$ in $\mathcal{T}_{l}^{*}$}
					\State $S\leftarrow \{G\}$ \label{alg:createbbs_iter}
					\For{$v$ in $\dang{G}$}
						\State $S'\leftarrow \{\}$
						\For{$\gG',\gG^B$ in $S\times\mathcal{T}_{1}^{*}$}
							\State $S' \leftarrow S'\:\cup\:$\Call{Glue}{$\gG$, $\gG^B$, $v$, $l$ }
						\EndFor
						\State $S\leftarrow S'$					
					\EndFor
					\State $\mathcal{T}_{l+1}\leftarrow\mathcal{T}_{l+1} \cup S$ 
					\label{alg:createbbs_iter_end}
				\EndFor
				\State $\mathcal{T}^{*}_{l+1}\leftarrow$ \Call{Filter}{$\mathcal{T}_{l+1},\grad{\mW_{l+1}},\tau,f_{l-1}$}
			\EndFor
			\State \Return $\mathcal{T}^{*}_{L}$ 
			\EndFunction
		\end{algorithmic}
	\end{algorithm}
\end{minipage}
\end{wrapfigure}
\paragraph{Creating $l$-hop building blocks $\mathcal{T}_l^*$}
For a building block $\gB \in \mathcal{T}_l^*$, we define the \emph{dangling nodes} $\dang{\gB}$ as the set of all nodes $v \in \gB$ such that $\deg(v)$ (the ground-truth degree of the node) is greater than the number of its neighbors.
We extend the $l$-hop building blocks $\gB \in \mathcal{T}_l^*$ by calculating all possible gluings of $1$-hop building blocks $\gB' \in \mathcal{T}_1^*$ to all dangling nodes of $\gB$.
This is shown in lines \ref{alg:createbbs_iter}--\ref{alg:createbbs_iter_end} of \cref{alg:createbbs}. 
The resulting set is then called $\mathcal{T}_{l+1}$, which is then filtered by applying \cref{thm:spancheck_ours} on $\grad{\mW^{l+1}}$ to achieve the reduced set of $l+1$-hop building blocks $\mathcal{T}_{l+1}^*$.
We repeat the process explained above until we reach the desired $L$-hop neighborhoods, with the final spancheck performed on the first linear layer of the commonly used readout classification head.

\paragraph{Additional structure-based filtering}

To further restrict the proposal set of building blocks, we perform a consistency check to rule out blocks that cannot be part of the ground truth graph.
Specifically, for every building block $\gB \in \mathcal{T}_L^*$ and for every dangling node in it $v \in \dang{\gB}$ we assert that there exists a building block in $\mathcal{T}_L^*$ that we can glue it at $v$ to $\gB$. If this is not the case, we know that either $\gB$ is the input graph (which we check by computing the gradients produced by $\gB$), or it cannot be part of the ground truth graph and remove it from $\mathcal{T}_L^*$. We denote the resulting set $\mathcal{T}_B^*$.

\newpage
\subsection{Graph Building}
\label{sec:technical_dfs}

\begin{wrapfigure}[15]{r}{0.44\textwidth}
	\vspace{-17mm}
	\begin{minipage}{0.44\textwidth}
		\begin{algorithm}[H]

			\caption{DFS reconstruction}
			\label{alg:dfs}
			\begin{algorithmic}[1]
				\Function{DoDFS}{$\mathcal{T}^{*}_{B}$,$\grad{\mW}$, 		$\gG_{\text{curr}}$, $\mathcal{C}$}
				\If{$|dang(\gG_{\text{curr}})| == 0$}{\label{check_grad}}
					\State \Return $\Delta_{\gG_{\text{curr}}}, \gG_{\text{curr}}$\label{sol_found}
				\EndIf
				\State
				\State $\gG_{\text{TOP}}, d_{\text{TOP}} \leftarrow  \emptyset,\infty$
				\State $\mathbb{B}_{\text{ord}}\leftarrow \text{Order}(\mathcal{T}^{*}_{B})$
				\State $v \leftarrow Sample(\{v \in dang(G_{\text{curr}})\})$\label{sample_db}
				\State  $\mathbb{G}_{\text{new}} \leftarrow \text{Branch}		(\mathbb{B}_{\text{ord}}$, $\gG_{\text{curr}}$, $v$)\label{branch}
				
				\For{$\gG$ in $\mathbb{G}_{\text{new}}$}{\label{11}}
					\State $d', \gG' \leftarrow $\Call{DoDFS}{$\mathcal{T}^{*}_{B}, \grad{\mW}, \gG, \gC$}
					\If{$d'==0$}{\label{13}}
						\State \Return $0, \gG'$\label{14}
					\ElsIf{$d_{\text{TOP}}>d'$}
					\State $d_{\text{TOP}}, \gG_{\text{TOP}}\leftarrow d',\gG'${\label{16}}
					\EndIf  
				\EndFor
				\State\Return $d_{\text{TOP}}, \gG_{\text{TOP}}$
				\EndFunction
			\end{algorithmic}
		\end{algorithm}

	\end{minipage}
\end{wrapfigure}

Finally, we describe how we leverage depth-first search to combine the filtered set of building blocks $\mathcal{T}_B^*$ into our final graph reconstruction.

Specifically, each node of our DFS exploration tree represents a partially reconstructed client graph $\gG_{\text{curr}}$, and at each branch we choose an arbitrary dangling node $v\in\gG_{\text{curr}}$ (Line \ref{sample_db} in \cref{alg:dfs}) and generate all possible graphs $\mathbb{G}=\glue{\gG_{\text{curr}}, \gB, v}$ that can be created by gluing a building block $\gB\in\mathcal{T}_B^*$ at $v$ (Line \ref{branch} in \cref{alg:dfs}). This is done by iterating over all the building blocks in $\mathcal{T}_B^*$, checking for each one if it is possible to glue it to $\gG_{\text{curr}}$ and if so, saving the extended graph as a new branch. The pseudocode for the branching is provided in \cref{app:branching}. During branching, we also take care of recovering possible cycles within the reconstructions by overlapping nodes in $\gG_{\text{curr}}$ that might coincide. This is also elaborated on in \cref{app:branching}. If $\gG_{\text{curr}}$ has no more dangling nodes, we calculate the distance between its and the client gradients:
\begin{equation}
	\label{eq:check_grad}
	\Delta_\gG = \min_{c\in \mathcal{C}}\|\tfrac{\partial\mathcal{L}(\gG, c)}{\partial\mW} - \grad{\mW}\|_F, 
\end{equation} where $\mathcal{C}$ is the set of all possible labels and $\|\cdot\|_F$ is the standard Frobenius norm. If $\Delta_{\gG_{\text{curr}}}=0$  the algorithm terminates early with correct graph (Line \ref{14} in \cref{alg:dfs}). Otherwise, we return the best possible reconstruction in gradient distance (Lines \ref{11}-\ref{16} in \cref{alg:dfs}).

\paragraph{Building block ordering}
A key factor in the performance of the algorithm is the order in which we visit nodes of the exploration tree. We use heuristic ordering based on a score $S(\gB)$ we define for every building block $\gB\in\mathcal{T}_B^*$. We first define the score $S_v(\gB)$ to be equal to the lowest span check distance $d(\gG, \grad{\mW^L})$ of a building block $\gB$ that can be glued to $\gG$ at $v$. The score for the entire block is then calculated as the sum of the vertex scores $S(\gB) = \sum_{v}S_v(\gB)$. Intuitively, this score represents how compatible any building block from $\mathcal{T}_B^*$ is with the other building blocks that passed the spanchecks. Again, intuitively, the building blocks which are part of the input will be more compatible with the other building blocks from the input than potential false positives.

\paragraph{Uniqueness heuristic}In some settings, such as social or citation networks we notice that the feature vectors of different nodes are almost surely unique. In these settings, we leverage a heuristic which during reconstruction always overlaps any two nodes with the same features. Further, in these settings we never try to glue the same building block twice, as $L$-hop neighborhoods are analogically unique. The heuristic drastically reduces the search space and the time for convergence of \tool.

\section{Evaluation}\label{sec:eval_top}

In this section we evaluate \tool's performance against prior gradient leakage methods. 

We begin by detailing our experimental setup and the baseline attacks considered, along with introducing a novel set of metrics, specifically designed to jointly assess the differences in node features and graph topology between the true client graphs and their reconstructions. The experimental results demonstrate \tool's substantial improvements over existing attacks, achieving superior reconstruction accuracy and versatility. Namely, we demonstrate that \tool remains effective across a wide range of architectural changes, GNN model types, data modalities, and task scenarios.

\subsection{Evaluation metric}
\label{sec:eval}
We found it necessary to design our own set of metrics, as prior graph-related similarity measurements were not suitable for evaluating gradient inversion attacks. A discussion on the desired qualities of the metrics, and the reasoning behind their design can be found in \cref{app:gsm_disc}.

To this end, we introduce the \emph{\textbf{G}raph \textbf{S}imilarity \textbf{M}etrics} (GSM) - a set of metrics designed to evaluate the similarity of a pair of graphs $\gG$ and $\hat{\gG}$ under the name $\text{\metric{N}}$, the details of which we showcase in \cref{app:gsm_disc}. A key highlight of GSM is that it assesses structures of varying globality, while ensuring the metric is invariant to graph isomorphism through rigorous node matching. 

We utilise 3 separate instances of the metric - namely for $N = 0,1,2$, where larger-hop neighborhoods are used to capture more structural information. It is important to note that all measurements are scaled by a factor of $\frac{\min(|\mathcal{V}|,|\hat{\mathcal{V}}|)}{\max(|\mathcal{V}|,|\hat{\mathcal{V}}|)}$ to penalize reconstructions of incorrect size. We further report the percentage of exactly reconstructed graphs, denoted by FULL in the result tables.

To measure the perceived reconstruction quality and compare it to our GSM set of metrics, as well as to confirm that our metrics are fair with respect to the baseline attacks, we further conducted a human evaluation study. In \cref{tab:study}, we show that our metrics are highly correlated with human perception. Further details about how this study was conducted are shown in \cref{app:experiments}.

\begin{wrapfigure}[4]{r}{0.55\textwidth}
	\vspace{-18mm}
	\begin{minipage}{0.55\textwidth}
		\begin{table}[H]
			\centering
			\caption{Comparison of GSM and human evaluation.}
			\vspace{-2mm}
			\label{tab:study}
			\begin{tabular}{lcccc}
				\toprule
				& GSM-0 & GSM-1 & GSM-2 & Human \\
				\midrule
				\tool & \textbf{72.6}    & \textbf{67.8}    & \textbf{66.9}    & \textbf{70.6}        \\
				DLG  & 24.2             & 10.5             & 12.0             & 6.5                  \\
				\bottomrule
			\end{tabular}
		\end{table}
	\end{minipage}
\end{wrapfigure}

\subsection{Experimental setup}
Next, we describe our experimental setup, including the architecture of the attacked models, the client datasets used, and the hardware required by the attacker.

\paragraph{Architecture details}
Unless otherwise specified, all of our attacks are applied on 2-layer GNNs ($L=2$) with a hidden embedding dimension $d'=300$ and a ReLU activation. For GAT experiments 2-headed attention was used (adapting \tool{} to GATs with more heads is analogous). All networks also feature a 2-layer feedforward network for performing the readout --- a common depth for GNNs \cite{kipf2016semi}. Given the depth restrictions, we recover building blocks up to layer 2, with the first readout layer being used for the relevant filtering of the largest blocks. In \cref{table:network_sizes_table}, we show that our attack is robust with respect to changes in these architectural parameters.

\paragraph{Evaluation datasets}
We evaluate on three different types of graph data -- chemical data, citation and social networks. For the chemical experiments, we evaluate on molecule property prediction data, where molecules are represented as graphs and each node is a given atom. 
We follow the common convention to omit hydrogen atoms in the graphs. 
Each node is embedded by concatenating the one-hot encodings of 8 features \citep{xu2018powerful, wu2020comprehensive}, namely the atom type, formal charge, number of bonds, chirality, number of bonded hydrogen atoms, atomic mass, aromaticity and hybridization \citep{rong2020self}. We evaluate \tool on 3 well-known chemical datasets -- Tox21, Clintox, and BBBP, introduced by the \textbf{MoleculeNet} benchmark \citep{wu2018moleculenet}.

For the citation networks experiments, we apply \tool on the CiteSeer\citep{CiteSeer} dataset, which features a single graph with 3312 nodes representing scientific publications classified into one of six classes. The edges between them represent citations, and the features of any node is a 0/1-valued word vector of length 3703 indicating the absence/presence of a keyword in the abstract.

Finally, for the social network experiments we use the Pokec\citep{pokec} dataset, containing 1.6 million nodes (users), where connections represent friendship between users. We chose discrete node features where every feature with frequency less that $1\%$ was categorized as "Other", as these entries usually contain irrelevant outliers. This resulted in 36 total discrete features.

To simulate a federated learning environment, in the latter two settings we sample subgraphs of a given size from the datasets for each of the FL clients and use cluster classification objective, where each subgraph is classified as the most common class among the comprising nodes. We chose the sampling distribution: 20 graphs with 1-10 nodes, 40 graphs with 10-20 nodes, 30 graphs with 20-30 nodes, and 10 graphs with 30-40 nodes, closely mirroring the distribution of the Tox21 dataset. 
\paragraph{Computational requirements}
We provide an efficient GPU imlementation, where each experiment has been run on a NVIDIA L4 Tensor Core GPU with less than 40GB of CPU memory.

\subsection{Baseline Attacks}
We adapt the DLG attack \citep{zhu2019deep}, a standard continuous attack, and TabLeak, an attack purposefully designed for recovering discrete tabular data. As described in \citep{vero2023tableak}, all input features are first passed through an initial sigmoid layer to ensure they are in the interval (0, 1). Similarly, we ensure the adjacency matrix $\mA$ is symmetric by optimizing over the upper triangle, and apply a softmax operation over the dummy labels to convert them to probabilities. Finally, we generate a prediction graph by connecting all nodes $v_i, v_j$ corresponding to $\sigmoid{(\mA)}_{ij} \geq 0.5$. Additionally, we test both baselines when they are given the correct adjacency matrix $\mA$. In all cases we provide the attack with the correct number of nodes to ensure that $\mX$ and $\mA$ have the correct shape. We demonstrate that, even when the baselines have a significant amount of prior knowledge, \tool significantly outperforms them (see \cref{fig:example} and \cref{table:main_vs_baseline}).

\subsection{Experimental results}
Next, we evaluate the baselines and \tool and show that \tool outperforms the existing baselines across all defined metrics. Further, \tool is applicable across a variety of datasets and settings, including being depth- and width-agnostic, and far more scalable and robust. In all measurements we quote the mean value of the metric, as well as the 95\% confidence interval around it, measured by generating 10,000 random sample sets via bootstrapping.

\begin{table*}\centering
	\vspace{-4mm}
	\caption{Results (in \%) of experiments on the 3 dataset types -- Tox21 (chemical), CiteSeer (citation), Pokec (social). Here "$+A$" refers to the baseline attack with the input adjacency matrix given.} \label{table:main_vs_baseline}
	\vspace{-2mm}
	\renewcommand{\arraystretch}{1.2}
	
	\newcommand{\temp}[1]{\textcolor{red}{#1}}
	\newcommand{\noopcite}[1]{} 
	
	\newcommand{\skiplen}{0.002\linewidth} 
	\newcommand{\rlen}{0.01\linewidth} 
	\newcolumntype{R}{>{$}r<{$}}
	\resizebox{\linewidth}{!}{
		\begin{tabular}{@{}c l RRRRR p{\skiplen} RRRRR @{}} \toprule
			&& \multicolumn{5}{c}{\text{GCN}} && \multicolumn{5}{c}{\text{GAT}}\\
			\cmidrule(l{5pt}r{5pt}){3-7} \cmidrule(l{5pt}r{5pt}){9-13} 
			&& \multicolumn{1}{c}{\text{\metric{0}}} & \multicolumn{1}{c}{\text{\metric{1}}}  & \multicolumn{1}{c}{\text{\metric{2}}} & \multicolumn{1}{c}{\text{FULL}} & \multicolumn{1}{c}{\text{Time [h]}}
			&& \multicolumn{1}{c}{\text{\metric{0}}} & \multicolumn{1}{c}{\text{\metric{1}}}  & \multicolumn{1}{c}{\text{\metric{2}}} & \multicolumn{1}{c}{\text{FULL}} & \multicolumn{1}{c}{\text{Time [h]}} \\ \midrule
			\multirow{5}{*}{Tox21}

			& \tool & \mathbf{86.9^{+4.2}_{-5.7}} & \mathbf{83.9^{+5.2}_{-6.9}} & \mathbf{82.6^{+5.7}_{-7.4}} &  \mathbf{68.0\pm1.7} & 14.3 && 92.9^{+3.8}_{-5.8} & \mathbf{90.7^{+5.0}_{-7.1}} & \mathbf{89.9^{+5.8}_{-7.2}} & \mathbf{75.0 \pm 1.8} & 10.8\\ 
			& DLG & 31.8^{+4.5}_{-4.3} & 20.3^{+5.5}_{-4.8} & 22.8^{+6.6}_{-5.6} &  1.0\pm0.2 & 3.3 && 96.0\pm 0.32 & 9.3^{+4.4}_{-4.9} & 6.5^{+3.9}_{-4.1} & 2.0 \pm 0.3 & \mathbf{4.2} \\
			& DLG $+A$ & 54.7^{+3.9}_{-4.2} & 60.1^{+4.6}_{-5.2} & 76.7^{+3.6}_{-4.8} &  1.0\pm0.2 & \mathbf{3.1} && \mathbf{96.5 \pm 0.34} & 69.7^{+4.1}_{-4.2} & 81.3^{+3.4}_{-3.6} & 2.0 \pm 0.3 & 4.5 \\
			& TabLeak & 25.1^{+5.1}_{-4.3} & 12.4^{+5.5}_{-4.3} & 10.8^{+5.6}_{-3.9} &  1.0\pm0.2 & 13.1 && 73.7^{+2.6}_{-2.0} & 7.2^{+5.2}_{-4.9} & 10.0\pm4.8 & 1.0 \pm 0.2 & 6.0
			\\
			& TabLeak $+A$ & 55.6^{+3.9}_{-3.9} & 57.7^{+4.1}_{-4.6} & 73.8^{+2.8}_{-3.5} & 1.0\pm0.2 & 12.3 && 75.1^{+2.5}_{-1.9} & 74.9^{+2.1}_{-1.9} & 84.2^{+1.5}_{-1.3} & 1.0 \pm 0.2 & 6.0
			\\
			
			\midrule
			\multirow{5}{*}{CiteSeer}
			&\tool &  62.5^{+7.7}_{-8.2} & \mathbf{31.0^{+8.0}_{-7.8}}
			 & \mathbf{31.6^{+8.1}_{-8.1}} & \mathbf{20.0\pm{0.8}} & \mathbf{2.5}  && \mathbf{79.3^{+4.7}_{-6.3}} & \mathbf{69.1^{+6.1}_{-6.4}} & \mathbf{69.6^{+6.2}_{-6.0}} & \mathbf{61.0\pm{1.6}} & \mathbf{2.1}
			\\
			& DLG & \mathbf{65.7^{+1.6}_{-1.7}} & 0.0^{+0.0}_{-0.0} & 0.1^{+0.1}_{-0.1} & 0.0\pm{0.0} & 25.6 && 65.7^{+1.6}_{-1.6} & 0.0^{+0.0}_{-0.0} & 0.0^{+0.0}_{-0.0} & 0.0\pm{0.0} & 26.3
			\\
			& DLG $+A$ & \mathbf{65.7^{+1.6}_{-1.6}} & 0.0^{+0.0}_{-0.0} & 0.0^{+0.0}_{-0.0} & 0.0\pm{0.0} & 26.8 && 65.7^{+1.7}_{-1.6} & 0.0^{+0.0}_{-0.0} & 0.0^{+0.0}_{-0.0} & 0.0\pm{0.0} & 28.2\\
			& TabLeak & 65.2^{+2.5}_{-2.4} & 0.0^{+0.0}_{-0.0} & 0.3^{+0.4}_{-0.3} & 0.0\pm{0.0} & 172.7 && 65.0^{+2.4}_{-2.3} & 0.0^{+0.0}_{-0.0} & 0.0^{+0.0}_{-0.0} & 0.0\pm{0.0} & 177.4\\
			& TabLeak $+A$ & 65.3^{+2.4}_{-2.2} & 0.0^{+0.0}_{-0.0} & 0.0^{+0.0}_{-0.0} & 0.0\pm{0.0} & 172.9 && 65.1^{+2.4}_{-2.3} & 0.0^{+0.0}_{-0.0} & 0.0^{+0.0}_{-0.0} & 0.0\pm{0.0} & 171.5\\
			
			\midrule
			\multirow{5}{*}{Pokec}
			&\tool&\mathbf{58.3^{+5.9}_{-5.9}} & \mathbf{30.7^{+7.9}_{-7.8}} & \mathbf{35.8^{+8.3}_{-7.9}} & \mathbf{15.0\pm{0.8}} & \mathbf{0.3} && \mathbf{97.2^{+1.6}_{-1.9}} & \mathbf{93.5^{+3.4}_{-4.2}} & \mathbf{96.3^{+1.9}_{-2.3}} & \mathbf{79.0\pm{1.8}} & \mathbf{0.5}\\
			& DLG&38.1^{+1.2}_{-1.2} & 0.0^{+0.0}_{-0.0} & 7.9^{+1.8}_{-1.8} & 0.0\pm{0.0} & 0.6&&35.8^{+1.2}_{-1.1} & 0.0^{+0.0}_{-0.0} & 0.1^{+0.1}_{-0.1} & 0.0\pm{0.0} & 1.2\\
			& DLG $+A$&37.4^{+1.2}_{-1.2} & 0.2^{+0.3}_{-0.2} & 26.3^{+2.1}_{-2.2} & 0.0\pm{0.0} & 0.5&&37.7^{+1.1}_{-1.1} & 0.1^{+0.1}_{-0.1} & 16.9^{+1.9}_{-1.9} & 0.0\pm{0.0} & 1.0\\
			& TabLeak&37.1^{+1.4}_{-1.3} & 0.1^{+0.1}_{-0.1} & 2.9^{+1.5}_{-1.3} & 0.0\pm{0.0} & 6.6&&37.0^{+1.1}_{-1.1} & 0.0^{+0.0}_{-0.0} & 1.1^{+0.9}_{-0.7} & 0.0\pm{0.0} & 12.3\\
			& TabLeak $+A$&37.9^{+1.3}_{-1.3} & 0.5^{+0.5}_{-0.4} & 20.1^{+2.8}_{-2.9} & 0.0\pm{0.0} & 5.7&&37.7^{+1.4}_{-1.4} & 0.3^{+0.3}_{-0.2} & 23.4^{+1.8}_{-1.8} & 0.0\pm{0.0} & 11.4\\
			
			\bottomrule
		\end{tabular}
	}
\end{table*}

\begin{figure*}
	\centering
	\vspace{-4.5mm}
	\includegraphics[width=0.9\linewidth]{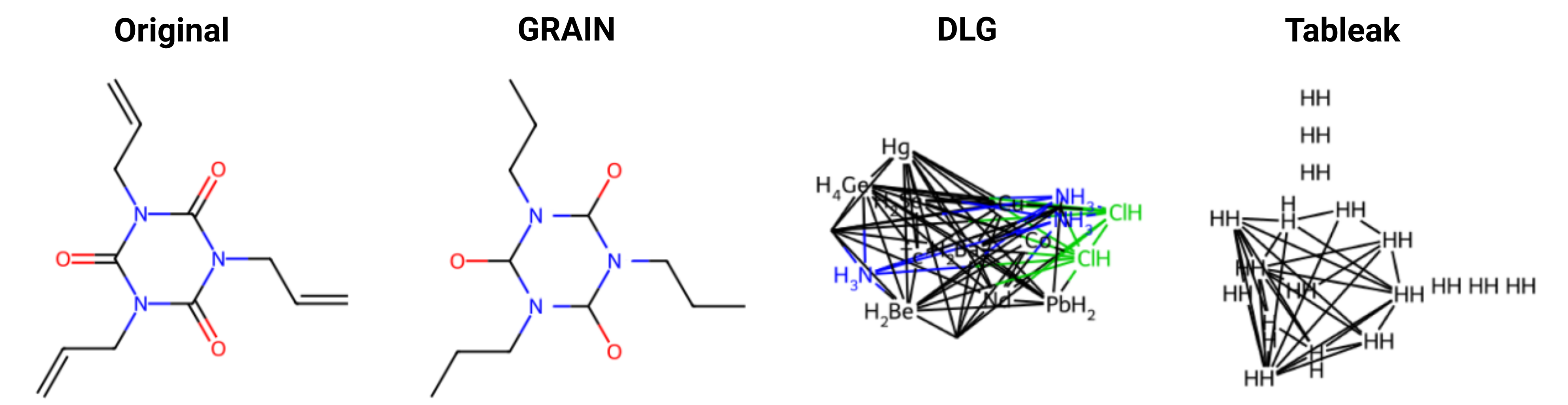}
	\vspace{-3.5mm}
	\caption{Examples molecule reconstructions. Multivalent interactions are not recovered, as they are not considered by the GNN.}
	\label{fig:example}
	\vspace{-4mm}
\end{figure*}
\paragraph{Main experiments} We first apply the algorithms DLG, TabLeak and \tool to the 3 types of dataset on both the GCN and GAT architectures. 
We observe in \cref{table:main_vs_baseline} that \tool achieves a much higher partial reconstruction rate (up to 96\%) compared to any baseline.
This remains true even when the baseline is informed about the input adjacency matrix $\mA$. Without $\mA$, baseline performance notably drops with neighborhood size, showing the baselines' inability to recover the structure.
Beyond partial reconstruction, \tool is further able to recover up to 80\% of the dataset exactly, while the baselines achieve this only in the case of very small graphs. Further, \tool closely matches TabLeak's runtime and is much faster than all baselines with the node uniqueness heuristic.
For visual inspection, we also include a comparison of a fully reconstructed molecule in \cref{fig:example}, with further examples in \cref{app:experiments}. 

As shown, \tool can also be effectively applied to both architectures, consistently achieving a higher score on the GAT architecture. This increase happens because as we showed in \cref{fig:gat_ablation}, $\grad{\mY^l}$ is almost certainly full-rank, where \cref{clr:spancheck_full_rank} enables the direct recovery of all inputs.

For the Tox21 baseline experiments we used a LBFGS optimizer for more stable and higher quality results. However, for CiteSeer and Pokec, where the client input space is much larger, we instead ran SGD due to time constraints. In \cref{app:ablation}, we show results with LBFGS on 10 times fewer samples.

\begin{table*}
	\vspace{-4mm}
	\caption{Comparison of \tool and baselines on chemical datasets for GCNs on graphs of size $n$.} 
	\label{table:comparison_nn}

	\vspace{-3mm}
	\newcommand{\twocol}[1]{\multicolumn{2}{c}{#1}}
	\newcommand{\threecol}[1]{\multicolumn{3}{c}{#1}}
	\newcommand{\fivecol}[1]{\multicolumn{5}{c}{#1}}
	\newcommand{\ninecol}[1]{\multicolumn{9}{c}{#1}}
	
	\newcommand{\bsz}{Batch Size~}
	\newcommand{\certified}{{CR(\%)}}
	
	\renewcommand{\arraystretch}{1.2}
	
	\newcommand{\ccellt}[2]{\colorbox{#1}{\makebox(20,8){{#2}}}}
	\newcommand{\ccellc}[2]{\colorbox{#1}{\makebox(8,8){{#2}}}}
	\newcommand{\ccells}[2]{\colorbox{#1}{\makebox(55,8){{#2}}}}
	
	\newcommand{\temp}[1]{\textcolor{red}{#1}}
	\newcommand{\noopcite}[1]{} 
	
	\newcommand{\skiplen}{0.01\linewidth} 
	\newcommand{\rlen}{0.01\linewidth} 
	\newcolumntype{R}{>{$}r<{$}}
	\resizebox{\linewidth}{!}{
		\begin{tabular}{@{}l RRR p{\skiplen}  RRR p{\skiplen} RRR @{}} \toprule
			& \threecol{$n\leq15$} && \threecol{$16\leq n\leq 25$} && \threecol{$26\leq n$}\\
			
			\cmidrule(l{5pt}r{5pt}){2-4} \cmidrule(l{5pt}r{5pt}){6-8} \cmidrule(l{5pt}r{5pt}){10-12} 
			
			& \multicolumn{1}{c}{\text{\metric{0}}} & \multicolumn{1}{c}{\text{\metric{2}}} & \multicolumn{1}{c}{\text{FULL}}
			&& \multicolumn{1}{c}{\text{\metric{0}}} & \multicolumn{1}{c}{\text{\metric{2}}} & \multicolumn{1}{c}{\text{FULL}}
			&& \multicolumn{1}{c}{\text{\metric{0}}} & \multicolumn{1}{c}{\text{\metric{2}}} & \multicolumn{1}{c}{\text{FULL}}\\ 
			\midrule			
			\multirow{1}{*}{\tool} 
			& \mathbf{93.0^{+3.4}_{-5.4}} & \mathbf{91.6^{+3.8}_{-6.3}} &  \mathbf{81.9\pm1.7} &&
			\mathbf{81.7^{+3.9}_{-4.8}} & \mathbf{74.8^{+5.8}_{-6.3}} &  \mathbf{43.6\pm1.1} &&
			50.1^{+6.8}_{-7.1} & 39.2^{+8.5}_{-7.7} &  \mathbf{5.1\pm0.6} \\

			\multirow{1}{*}{DLG } 
			& 27.4^{+4.2}_{-3.8} & 13.3^{+5.7}_{-4.6} &  1.0\pm0.2 &&
			25.5^{+3.9}_{-3.5} & 16.7^{+5.2}_{-4.4} &  0.9\pm0.2 &&
			25.4^{+4.8}_{-4.3} & 14.8^{+6.4}_{-5.3} &  0.0\pm0.0 \\

			\multirow{1}{*}{DLG $+A$} 
			& 52.1^{+3.1}_{-3.3} & 71.3^{+3.1}_{-3.9} &  1.0\pm0.2 && 53.7^{+3.2}_{-3.5} & 75.3^{+3.0}_{-3.7} &  0.9\pm0.2 && 53.0^{+4.3}_{-4.8} & 72.6^{+4.1}_{-5.8} &  0.0\pm0.0 \\

			\multirow{1}{*}{TabLeak} 
			& 30.3^{+5.0}_{-4.4} & 15.4^{+5.8}_{-4.8} &  1.9\pm0.3 &&
			15.7^{+3.0}_{-2.2} & 2.1^{+2.2}_{-1.1} &  0.0\pm0.0 &&
			13.0^{+3.3}_{-2.3} & 2.8^{+4.1}_{-1.9} &  0.0\pm0.0  \\
						
			\multirow{1}{*}{TabLeak $+A$} 
			& 53.9^{+4.0}_{-4.2} & 72.9^{+3.4}_{-3.9} &  1.9\pm0.3 &&
			57.1^{+3.1}_{-3.5} & 71.4^{+2.9}_{-3.6} &  0.0\pm0.0 &&
			\mathbf{56.1^{+2.9}_{-3.3}} & \mathbf{74.4^{+2.3}_{-3.1}} &  0.0\pm0.0\\
			
			\bottomrule
		\end{tabular}
	}
\end{table*}
\paragraph{Effect of graph size on reconstruction}
In \cref{table:comparison_nn} and \cref{tab:pokec} we show how \tool performs on graphs of different sizes under the same setting as our main experiments (for GCNs and GATs respectively). In the chemical setting (\cref{table:comparison_nn}) molecules are divided into groups of $n\leq15$, $16\leq n\leq 25$ or $n\geq26$ nodes, aggregated across the 3 chemical datasets. We notice that \tool significantly outperforms the baselines for smaller graphs, but the performance decreases on the largest groups in both the chemical and the social networks setting due to timeouts (15 minutes) during graph building. 
That said, the social network setting (\cref{tab:pokec}), allows us to apply the node uniqueness heuristic, allowing us to scale to much larger graphs of size up to $n=60$ nodes.
Further, our work still manages to reconstruct a fraction of the large graphs exactly, which is impossible for the baseline models,  even in the more difficult chemical setting (discussed in \cref{sec:limitations}).

\begin{table*}
	\vspace{-2mm}
	\centering
	\begin{minipage}[t]{0.505\textwidth}
		\centering
		\caption{Reconstruction results in \% for \tool on Citeseer and GATs for different ablations.}
		\label{tab:gat_additional}
		\newcommand{\twocol}[1]{\multicolumn{2}{c}{#1}}
		\newcommand{\threecol}[1]{\multicolumn{3}{c}{#1}}
		\newcommand{\fivecol}[1]{\multicolumn{5}{c}{#1}}
		\newcommand{\ninecol}[1]{\multicolumn{9}{c}{#1}}
		
		\newcommand{\bsz}{Batch Size~}
		\newcommand{\certified}{{CR(\%)}}
		
		\renewcommand{\arraystretch}{1.2}
		
		\newcommand{\ccellt}[2]{\colorbox{#1}{\makebox(20,8){{#2}}}}
		\newcommand{\ccellc}[2]{\colorbox{#1}{\makebox(8,8){{#2}}}}
		\newcommand{\ccells}[2]{\colorbox{#1}{\makebox(55,8){{#2}}}}
		
		\newcommand{\temp}[1]{\textcolor{red}{#1}}
		\newcommand{\noopcite}[1]{} 
		
		\newcommand{\skiplen}{0.01\linewidth} 
		\newcommand{\rlen}{0.01\linewidth} 
		\newcolumntype{R}{>{$}r<{$}}
		\resizebox{\linewidth}{!}{
		\begin{tabular}{@{}l RRRR @{}} \toprule

				& \multicolumn{1}{c}{\text{\metric{0}}} & \multicolumn{1}{c}{\text{\metric{1}}} & \multicolumn{1}{c}{\text{\metric{2}}} &  \multicolumn{1}{c}{\text{FULL}} \\ 
				
				\midrule
				
				\multirow{1}{*}{Default} 
				& \mathbf{79.3^{+4.7}_{-6.3}} & \mathbf{69.1^{+6.1}_{-6.4}} & \mathbf{69.6^{+6.2}_{-6.0}} & \mathbf{61.0\pm1.6}
				\\

				\multirow{1}{*}{{No degree}}
				& 59.7^{+6.8}_{-7.2} & 42.7^{+6.3}_{-6.6} & 43.2^{+6.4}_{-6.6} & 32.0\pm1.1\\

				\multirow{1}{*}{{No heuristic}}
				& 64.6^{+3.5}_{-4.2} & 52.1^{+4.7}_{-5.3} & 52.4^{+4.6}_{-5.2} & 44.0\pm 1.3\\
				
				\bottomrule
		\end{tabular}}
	\end{minipage}
	\hfill
	\begin{minipage}[t]{0.47\textwidth}
		\centering
		\caption{Reconstruction results in \% for \tool on Tox21 for GCNs in various settings}
		\label{table:extra_table}
		\vspace{-1.9mm}
		\newcommand{\twocol}[1]{\multicolumn{2}{c}{#1}}
		\newcommand{\threecol}[1]{\multicolumn{3}{c}{#1}}
		\newcommand{\fivecol}[1]{\multicolumn{5}{c}{#1}}
		\newcommand{\ninecol}[1]{\multicolumn{9}{c}{#1}}
		\newcommand{\bsz}{Batch Size~}
		\newcommand{\certified}{{CR(\%)}}
		
		\renewcommand{\arraystretch}{1.2}
		
		\newcommand{\ccellt}[2]{\colorbox{#1}{\makebox(20,8){{#2}}}}
		\newcommand{\ccellc}[2]{\colorbox{#1}{\makebox(8,8){{#2}}}}
		\newcommand{\ccells}[2]{\colorbox{#1}{\makebox(55,8){{#2}}}}
		
		\newcommand{\temp}[1]{\textcolor{red}{#1}}
		\newcommand{\noopcite}[1]{} 
		
		\newcommand{\skiplen}{0.01\linewidth} 
		\newcommand{\rlen}{0.01\linewidth} 
		\newcolumntype{R}{>{$}r<{$}}
		\resizebox{\linewidth}{!}{
		\begin{tabular}{@{}l RRRR @{}} \toprule

				& \multicolumn{1}{c}{\text{\metric{0}}} & \multicolumn{1}{c}{\text{\metric{1}}} & \multicolumn{1}{c}{\text{\metric{2}}} &  \multicolumn{1}{c}{\text{FULL}} \\ 
				
				\midrule
				
				\multirow{1}{*}{Default} 
				& 86.9^{+4.2}_{-5.7} & 83.9^{+5.2}_{-6.9} & 82.6^{+5.7}_{-7.4} & \mathbf{68.0\pm1.7}
				\\

				\multirow{1}{*}{$\sigma$ = GELU} 
				& 82.0^{+5.3}_{-6.7} & 79.1^{+6.0}_{-7.4} & 78.4^{+6.2}_{-8.0} & 61.0\pm1.6
				\\

				\multirow{1}{*}{Pre-trained} 
				& 73.5^{+6.4}_{-7.4} & 70.0^{+7.3}_{-7.7} & 68.6^{+7.6}_{-8.3} & 49.0\pm1.4\\

				\multirow{1}{*}{Node Class.}
				& \mathbf{88.0^{+3.8}_{-5.4}} & \mathbf{85.5^{+4.6}_{-6.5}} & \mathbf{84.9^{+5.0}_{-6.6}} & 66.0\pm 1.6\\
				
				\bottomrule
		\end{tabular}}
	\end{minipage}
	\vspace{-6mm}
\end{table*}

\paragraph{Ablation studies}
We analyze the impact of our design choices and heuristics on \tool in \cref{tab:gat_additional}. Removing the node in-degree from the feature set and enumerating all possibilities during filtering still enabled exact recovery of ~30\% of graphs but caused substantial degradation. A similar, though less severe, drop occurs without the node uniqueness heuristic (\cref{sec:technical_dfs}), highlighting the key role of data-specific heuristics in strengthening the attack. We also show that \tool's stability is unaffected by architecture parameters and thresholds. As demonstrated in \cref{app:ablation}, network width and depth have minimal impact, provided the embedding dimension exceeds the number of graph nodes ($d > n$). Additionally, the $\tau$ threshold remains robust, with values between $10^{-4}$ and $10^{-2}$ yielding nearly identical filtering performance.

\paragraph{Additional experiments}
We provide additional experiments showcasing \tool's performance in different miscellaneous settings in \cref{table:extra_table}.
First, we replace the ReLU activation function in the GCN by a GELU and report that \tool achieves similar results, showing our flexibility with respect to different activations. 
Furthermore, while prior work has shown that gradient inversion becomes significantly more difficult on pre-trained models \citep{geiping2020inverting}, \tool still manages to reconstruct around 50\% of molecules exactly. 
Finally, we achieve consistently strong results in the node classification task, assuming ground-truth labels are known, which can be easily recovered using methods like \citet{zhao2020idlg}.

\section{Conclusion} 

We introduced \tool, the first gradient inversion attack for Graph Neural Networks capable of accurately recovering graphs from shared gradients. By leveraging the rank-deficiency of the GNN layers, we developed an efficient framework for extracting and filtering subgraphs of the input graph, which are iteratively combined to reconstruct the original graph. 
 
Our results showed \tool achieves an exact reconstruction rate of up to 80\% for graph classification. We introduced new metrics to evaluate partial graph reconstructions and demonstrated that \tool significantly outperforms prior work. Moreover, \tool maintains high reconstruction quality across different architectures, parameters, and settings, and can scale to much larger graphs. 

In summary, our paper is the first to demonstrate that GNN training in a federated learning setting poses data privacy risks. We believe that this is a promising initial step towards identifying these vulnerabilities and developing effective defense mechanisms.

\message{^^JLASTBODYPAGE \thepage^^J}
\clearpage
\subsubsection*{Acknowledgments}
INSAIT, Sofia University "St. Kliment Ohridski". Partially funded by the Ministry of Education and Science of Bulgaria's support for INSAIT as part of the Bulgarian National Roadmap for Research Infrastructure.

This project was supported with computational resources provided by Google Cloud Platform (GCP).

This work has been done as part of the EU grant ELSA (European Lighthouse on Secure and Safe AI,
grant agreement no. 101070617) . Views and opinions expressed are however those of the authors
only and do not necessarily reflect those of the European Union or European Commission. Neither
the European Union nor the European Commission can be held responsible for them.

The work has received funding from the Swiss State Secretariat for Education, Research and Innova-
tion (SERI).

\bibliography{references}
\bibliographystyle{iclr2025_conference}

\message{^^JLASTREFERENCESPAGE \thepage^^J}

\ifincludeappendixx
	\clearpage
	\appendix
	
\section{Additional technical details}
\label{app:technical}
\subsection{Table of Notations}
For convenience, we add a table of notations containing brief definitons for all symbols used in our work.

\begin{table}[!hbt]
    \centering
    \caption{Table of notations used in the technical description of GRAIN.}
    \vspace{-2mm}
    \renewcommand{\arraystretch}{1.2} %
    \setlength{\tabcolsep}{4pt}      %
    \resizebox{\textwidth}{!}{
    \begin{tabular}{@{}p{0.15\textwidth}p{0.35\textwidth}@{\hskip 1cm}p{0.15\textwidth}p{0.35\textwidth}@{}}
         \toprule
         \textbf{Symbol} & \textbf{Definition} & \textbf{Symbol} & \textbf{Definition} \\
         \midrule
         $\mathcal{G} = (V, E)$ & Graph with nodes $V$ and edges $E$ & $n$ & \# of nodes in the graph \\
         
         $\boldsymbol{A^l}$ & Possibly weighted adjacency matrix at layer $l$ & $\text{dist}(v_s,v_e)$ & \# edges in shortest path connecting nodes $v_s, v_e \in V$ \\

         $\mathcal{N}^{k}_{\mathcal{G}}(v)$ & $k$-hop neighborhood in graph $\mathcal{G}$ with center node $v$ & $\mathcal{L}$ & Loss \\

         $\deg_{\mathcal{G}}(v)$ & Degree of node $v$ in graph $\mathcal{G}$ & $\deg(v)$ & Degree of node $v$ as given by its feature \\

         $\boldsymbol{X}^{i}$ & Input to the $i$th GNN layer & $\boldsymbol{X}^{i}_{v}$ & $i$-th layer input feature of node $v$ \\

         $\boldsymbol{W}^{i}$ & Weights of the $i$-th layer & $d'$ & Hidden dimension size \\

         $L$ & Number of GNN layers & $m$ & Number of features \\

         $f_i$ & Function mapping the input graph to the output of the $i$-th layer & $\tau$ & Span check distance threshold \\

         $\mathcal{F}$ & $\mathcal{F}_1 \times \dots \times \mathcal{F}_m$ - set of all possible feature combinations & $\mathcal{F}_{i}$ & Set of values for the $i$-th feature \\

         $\mathcal{T}_{l}$ & Proposal set of $l$-hop building blocks & $\mathcal{T}_{l}^{*}$ & Filtered set of $l$-hop building blocks \\

         $\mathcal{T}_{B}^{*}$ & Final set of filtered building blocks & $\sigma$ & Activation function \\

         $\Delta_{\mathcal{G}}$ & Distance between the gradients of $\mathcal{G}$ and observed gradients & $d_{best}$ & Gradient distance of the best reconstructed graph \\

         GSM-N($\mathcal{G}, \hat{\mathcal{G}}$) & Similarity between N-hop neighborhoods of $\mathcal{G}$ and $\hat{\mathcal{G}}$ & $\mathcal{G}_{best}$ & The best reconstructed graph. \\

         \bottomrule
    \end{tabular}}
    \vspace{-5mm}
    \label{tab:table_of_notations}
\end{table}

\subsection{Deferred proofs}
\label{app:proofs}
\subsubsection{Span Check Proof}
Here we show the proof of \cref{thm:spancheck_ours}, which we restate here for convenience:
\spcheck*
\begin{proof}

For notational clarity, we omit the layer index $l$ in our proof. We separate the proof in 3 steps:
\begin{itemize}
	\item Step 1: $\mX_i^T \in \ColSpan(\grad{W})$ if and only if there is a vector $\alpha_i$, such that $ \grad{\mY}\alpha_i = e_i$, where $e_i$ is the $i$-th standard basis vector.
	\item Step 2: There is a vector $\alpha_i$, such that $\grad{\mY}\alpha_i = e_i$ if and only if $\NullSpace(\hat{\grad{\mY_i}})\not\subseteq\NullSpace(\grad{\mY_i})$.
	\item Step 3: $\NullSpace(\hat{\grad{\mY_i}})\not\subseteq\NullSpace(\grad{\mY_i})$ is equivalent to $\grad{\mY_i}\notin\RowSpan(\hat{\grad{\mY_i}})$.
\end{itemize}

\paragraph{Step 1 ($ \mX_i^T \in \ColSpan(\grad{\mW}) \iff \exists \alpha_i. \grad{\mY}\alpha_i = e_i$):}
\begin{itemize}
	\item[($\Rightarrow$)] First, we begin by expressing $\grad{\mW}$ using the following common result from \citet{petrov2024dager}:
	
	$$\grad{\mW} = \mX^T\grad{\mY},$$
	implying that $\mX_i^T \in \ColSpan(\mX^T\grad{\mY}) = \RowSpan(\grad{\mY}^T\mX)$. This can be rewritten as $\exists \alpha_i. \alpha_i^T \grad{\mY}^T\mX = \mX_i^T$. Assuming $\mX \in \mathbb{R}^{n\times d}$ is full-rank, then there exists a right-inverse $\mX^{-R}$, as $\rank(\mX) = n < d$. 
	$$\alpha_i^T \grad{\mY}^T\mX\mX^{-R} = \mX_i^T\mX^{-R} \Rightarrow \alpha_i^T \grad{\mY}^T = e_i^T \iff \grad{\mY}\alpha_i = e_i$$

	It is notable that $\mX$ not being full-rank still allows for all nodes with feature vectors in $\mX$ to pass the span check, however it is possible that some hallucinated inputs might also pass the check.

	\item[($\Leftarrow$)]
		\begin{align*}
			\grad{\mY}\alpha_i &= e_i \iff \\
			\alpha_i^T \grad{\mY}^T &= e_i^T \iff \\
			\alpha_i^T \grad{\mY}^T X &= X_i^T \\
		\end{align*}
		Thus $\mX_i^T \in \RowSpan(\grad{\mY}^T\mX) = \ColSpan(\mX^T\grad{\mY}) = \ColSpan(\grad{\mW})$. Therefore $\mX_i^T \in \ColSpan(\grad{\mW})$.

\end{itemize}

\paragraph{Step 2 ($\exists\alpha_i. \grad{\mY}\alpha_i= e_i\iff\NullSpace(\hat{\grad{\mY_i}})\not\subseteq\NullSpace(\grad{\mY_i})$):}

First, for both directions of the proof, we can separate $\grad{\mY}\alpha_i = e_i$ into 2 different requirements:

\begin{equation}
	\label{eq:proof5_1_eq1}
	\hat{\grad{\mY_i}}\alpha_i = \bm{0}
\end{equation}

\begin{equation}
	\label{eq:proof5_1_eq2}
	\grad{\mY_i}\alpha_i = 1
\end{equation}

\begin{itemize}
	\item[($\Rightarrow$)] Assuming the existence of $\alpha_i$ with $ \grad{\mY_i}\alpha_i = e_i$, we know that \cref{eq:proof5_1_eq1} and \cref{eq:proof5_1_eq2} hold. 
	It is evident with $\alpha_i \in \NullSpace(\hat{\grad{\mY_i}})$ but $\alpha_i \notin \NullSpace({\grad{\mY_i}})$ that $\NullSpace(\hat{\grad{\mY_i}})\not\subseteq\NullSpace(\grad{\mY_i})$.%
	
	\item[($\Leftarrow$)] First of all, we note that $\grad{\mY_i} \in \mathbb{R}^{1\times d}$ has rank 1, as $\grad{\mY_i}$ contains a non-zero entry under normal training conditions. Therefore, $\grad{\mY_i}$ has $\nullity(\grad{\mY_i}) = n - 1$ due to the rank-nullity theorem. $\NullSpace(\hat{\grad{\mY_i}})\not\subseteq\NullSpace(\grad{\mY_i})$ implies that there exists an $\alpha_i' \in\NullSpace(\hat{\grad{\mY_i}})$, such that $\alpha_i'\notin\NullSpace(\grad{\mY_i})$ (since $\nullity(\grad{\mY_i}) = n - 1<n$ this set is non-empty). For that $\alpha_i$, the following hold:
$$\hat{\grad{\mY_i}}{\alpha_i'} = \bm{0}$$
$$\grad{\mY_i}{\alpha_i'} = c$$
Therefore, if we take $\alpha_i = \frac{1}{c}\alpha_i'$, $\alpha_i$ would satisfy both (2) and (3), giving us a valid solution.
\end{itemize}

\paragraph{Step 3: ($\NullSpace(\hat{\grad{\mY_i}})\not\subseteq\NullSpace(\grad{\mY_i})\iff\grad{\mY_i}\notin\RowSpan(\hat{\grad{\mY_i}})$)}

First of all, the statement is equivalent to negating both sides, or $\NullSpace(\hat{\grad{\mY_i}})\subseteq\NullSpace(\grad{\mY_i})\iff\grad{\mY_i}\in\RowSpan(\hat{\grad{\mY_i}}) $, which can be shown by the following steps:

\begin{align*}
	\NullSpace(\hat{\grad{\mY_i}})\subseteq\NullSpace(\grad{\mY_i}) 
	&\iff 
		\NullSpace(\grad{\mY_i})^C\subseteq\NullSpace(\hat{\grad{\mY_i}})^C \\
	&\iff
		\RowSpan(\grad{\mY_i})\subseteq\RowSpan(\hat{\grad{\mY_i}}) \\
	&\iff 
		\grad{\mY_i}\in\RowSpan(\hat{\grad{\mY_i}})
\end{align*}

Here we used that the complenetary subspace of the null space of matrix is the rowspan of the matrix $\NullSpace(M)^C = \RowSpan(M)$. The last step follows from that the fact that $\grad{\mY_i}$ is a single common vector, and therefore all vectors in $\ColSpan(\grad{\mY_i})$ are of the form $\lambda \grad{\mY_i}$. This concludes our proof.

\end{proof}

We extend this by presenting the proof for \cref{clr:spancheck_full_rank}:
\fullrank*
\begin{proof}
	For notational clarity, we omit the layer index $l$ in our proof.

	From $\mZ = \mA\mY$, by applying the following common result from \citet{petrov2024dager} (replacing $\mW$ with $\mY$, and $\mY$ with $\mZ$), we obtain:
	$$\grad{\mY} = \mA^T\grad{\mZ}$$
	Therefore, $\grad{\mY_i} = \mA^T_i\grad{\mZ}$. We will now prove that if $\grad{\mZ}$ is full-rank, then $\mA^T_i\notin\ColSpan(\hat{\mA_i})\iff\grad{\mY_i}\notin\RowSpan(\hat{\grad{\mY_i}})$. This is equivalent to proving the converse, or $\mA^T_i\in\ColSpan(\hat{\mA_i})\iff\grad{\mY_i}\in\RowSpan(\hat{\grad{\mY_i}})$
	
	($\Rightarrow$) $\mA^T_i\in\ColSpan(\hat{\mA_i})$ implies that there exist coefficients $\alpha_1, \alpha_2, \cdots, \alpha_{n}$, such that:
	$$\mA^T_i = \sum_{j=\{1,2,\dots,N\}\setminus\{i\}}\alpha_j\mA_j^T,$$
	
	Multiplying both sides by $\grad{\mZ}$ gives:
	$$\mA^T_i\grad{\mZ} = \sum_{j=\{1,2,\dots,N\}\setminus\{i\}}\alpha_j\mA_j^T\grad{\mZ}\iff\grad{\mY_i} = \sum_{j=\{1,2,\dots,N\}\setminus\{i\}}\alpha_j\grad{\mY_j}\iff\grad{\mY_i}\in\RowSpan(\hat{\grad{\mY_i}})$$
	
	($\Leftarrow$) We can similarly rewrite $\grad{\mY_i}\in\RowSpan(\hat{\grad{\mY_i}})$ as:
	$$\grad{\mY_i} = \sum_{j=\{1,2,\dots,N\}\setminus\{i\}}\alpha_j\grad{\mY_j}\iff\mA^T_i\grad{\mZ} = \sum_{j=\{1,2,\dots,N\}\setminus\{i\}}\alpha_j\mA_j^T\grad{\mZ}$$
	
	As $\grad{\mZ}$ is full-rank, then there exists a right-inverse $\grad{\mZ}^{-R}$. Multiplying on both sides gives:
	$$\mA^T_i\grad{\mZ}\grad{\mZ}^{-R} = \sum_{j=\{1,2,\dots,N\}\setminus\{i\}}\alpha_j\mA_j^T\grad{\mZ}\grad{\mZ}^{-R}\iff$$
	$$\mA^T_i = \sum_{j=\{1,2,\dots,N\}\setminus\{i\}}\alpha_j\mA_j^T\iff \mA^T_i\in\ColSpan(\hat{\mA_i})$$
	
	From here, we can now apply \cref{thm:spancheck_ours}, which gives us that $\mX_i \in \ColSpan(\grad{\mW})$ if and only if $\mA^T_i\notin\ColSpan(\hat{\mA_i})$. If $\mA$ is full-rank, then $\mA^T_i\notin\ColSpan(\hat{\mA_i})$ for all $i = 1, 2, \cdots, n$, implying that $\mX_i \in \ColSpan(\grad{\mW})$ for all $i$. This concludes the proof.
\end{proof}

\subsubsection{Embedding Recovery Proof}
In this section, first we formally state \cref{assmp:dependence} on which \cref{thm:sbgrprop} is based. Next, using it we provide a proof for \cref{thm:sbgrprop}. Finally, we prove that \cref{assmp:dependence} is satisfied for common GNN architectures, such as GCNs and GATs. 

For simplicity, in the rest of the section we will denote the node of $\gG$ corresponding to the the $i^{\text{th}}$ row of the embedding matrices $\mX^l_i$ with $v_i$. 
\begin{assumption}{}
	\label{assmp:dependence}
	The output corresponding to the node $v_i$ of the $l$-th layer of the GNN, $X^{l+1}_i$, is independent on $X^{l}_j$ for $\forall v_j \notin \gN_{\gG}^1(v_i)$.
\end{assumption}
The assumption intuitively states that for most popular GNNs the embeddings of a node only depends on the embeddings of its neighbourhood on the previous layer. Using this, we now present the proof of \cref{thm:sbgrprop}, restating it for convinience:
\sbgrprop*
\begin{proof}

	First we will show that the embedding $\mX^l_i$ is independent of the embeddings of any nodes that do not belong in the $l$-hop neighborhood of $v$. For $l=1$ this follows immediately from \cref{assmp:dependence}. For $l=2$, we know that $\mX^2_i$ is only dependent on the embeddings $\mX^1_k$ of the neighboring nodes $v_k\in\gN^1_\gG(v)$ by applying \cref{assmp:dependence}. However, the embeddings $\mX^1_k$ themselves only depends on the embeddings of the nodes in the $1$-hop neighborhood of $v_k$. Therefore, the only nodes that can influence $\mX^2_i$ are $v$, its neighbors, and the neighbors of its neighbors, which exactly comprise the $2$-hop neighborhood of $v$. Similarly, we can show that $\mX^l_i$ is independent of the embeddings of any nodes that are not part of the $l$-hop neighborhood of $v$ by induction.

	We now take the full graph $\gG$ and remove all edges starting or ending at nodes not in $\gG^l_v$, and set their feature values to $0$, obtaining $\hat{\gG}$. As shown above, this implies the output of $f_{l-1}$ for $\gG$ at $v$ is the same as the one for $\hat{\gG}$. Note that applying the network on $\hat{\gG}$ is the same as applying it to $\gG^l_v$ except for few the additional embeddings produced for the disconnected nodes. By reindexing we arrive at our formula.

	\end{proof}

	Finally, we prove our assumption \cref{assmp:dependence} holds for all GNN architectures considered in this paper:
	
	\begin{theorem} \cref{assmp:dependence} holds for GCNs \end{theorem}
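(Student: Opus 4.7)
The plan is to verify the assumption directly by unrolling the GCN layer formula and examining which columns of $\mX^l$ actually enter the computation of a given row $\mX^{l+1}_i$. Since the assumption is purely about functional dependence, it suffices to show that the explicit formula for $\mX^{l+1}_i$ only involves $\mX^l_j$ when $v_j$ is in the $1$-hop neighborhood of $v_i$.

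First I would recall the GCN layer from \cref{eq:gnn}, namely $\mX^{l+1} = \sigma(\mA^l \mX^l \mW^l)$, where for a GCN the (possibly symmetrically normalized) adjacency matrix $\mA^l$ is fixed by the graph topology and includes self-loops. Writing out the $i$-th row of the product gives
\begin{equation*}
\mX^{l+1}_i \;=\; \sigma\!\Bigl(\sum_{j=1}^{n} \mA^l_{ij}\, \mX^l_j \mW^l\Bigr).
\end{equation*}
The next step is to recall that in the GCN construction, $\mA^l_{ij}$ is non-zero only when there is an edge between $v_i$ and $v_j$ (including the self-loop contribution at $j=i$); this is true both for the unnormalized adjacency and for the standard symmetric normalization $\tilde{\mD}^{-1/2}(\mA+\mI)\tilde{\mD}^{-1/2}$, since normalization only rescales entries and cannot create new non-zero positions.

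Therefore, for every $v_j \notin \gN^1_\gG(v_i)$ we have $\mA^l_{ij} = 0$, so the corresponding term $\mA^l_{ij}\,\mX^l_j \mW^l$ vanishes identically and $\mX^{l+1}_i$ is computed without reference to $\mX^l_j$. Hence $\mX^{l+1}_i$ depends only on $\{\mX^l_j : v_j \in \gN^1_\gG(v_i)\}$, which is precisely the content of \cref{assmp:dependence}. I do not expect a real obstacle here: the only subtlety is making sure the self-loop is treated as part of the $1$-hop neighborhood (consistent with the convention $\gN^0_\gG(v)=\{v\}\subseteq \gN^1_\gG(v)$ used earlier), and that normalization preserves the sparsity pattern of the adjacency matrix.
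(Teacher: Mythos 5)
Your proof is correct and follows essentially the same route as the paper's: write out the $i$-th row of $\sigma(\mA^l\mX^l\mW^l)$ and observe that the (normalized) adjacency entries $\mA^l_{ij}$ vanish for $v_j\notin\gN^1_\gG(v_i)$, so no feature vector outside the $1$-hop neighborhood enters the computation. The paper's version additionally remarks that the normalization coefficients require knowing node degrees (relevant for actually evaluating the propagation in \cref{thm:sbgrprop}), but that is an algorithmic concern rather than part of establishing \cref{assmp:dependence} itself, so your argument is complete as stated.
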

	\begin{proof}
		For GCN models, the formula for computing the $i^\text{th}$ embedding at layer $l+1$, $\mX^{l+1}_i$, can be expressed as:
		\begin{equation*}
			\mX^{l+1}_i = \sigma\left(\sum_{v_j \in \mathcal{N}^1_{\gG}(v_i)} \mA^{l}	_{i,j} \mathbf{W}^{l} \mX^{l}_j\right)
		\end{equation*}
		where $\mA^{l}_{i,j}$ is the normalized strength of the connection between $v_i$ and $v_j$ at layer $l$ given by:
		\begin{equation*}
			\mA^{l}_{i,j} = \frac{1}{\sqrt{deg(v_i)deg(v_j)}}.
		\end{equation*}
		Assuming the degrees of the nodes $v_j\in\mathcal{N}^1_{\gG}(v_i)$ are known, $\mX^{l+1}_i$ is independent on $\mX^{l}_j$ for $v_j\notin\mathcal{N}^1_{\gG}(v_i)$.
		We note that for a correctly recovered $l$-hop neighbourhood around $v$, the degrees of all nodes in the $(l-1)$-hop neighbourhood around $v$ can be recovered exactly, while for the remaining nodes one can provide a sound lower bound for their degree. Assuming a sound guess on the upper bound on the largest degree in the graph, this allows to recover the degrees of the remaining nodes via enumeration starting from these computed lower bounds. Further, if the input features $\mX^0_i$ of $v_i$ contain information about the node degree of  $v_i$, this information can be easily incorporated to reduce the computational overhead of the enumeration.
	\end{proof}
	\begin{theorem} \cref{assmp:dependence} holds for GATs \end{theorem}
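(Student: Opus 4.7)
The plan is to unfold the GAT update rule and inspect which rows of $\mX^l$ actually enter the computation of $\mX^{l+1}_i$. For a single attention head, a GAT layer computes
\[
\mX^{l+1}_i \;=\; \sigma\!\left(\sum_{v_j \in \mathcal{N}^1_{\gG}(v_i)} \alpha^l_{i,j}\, \mW^l \mX^l_j\right), \qquad \alpha^l_{i,j} \;=\; \frac{\exp(e^l_{i,j})}{\sum_{v_k \in \mathcal{N}^1_{\gG}(v_i)}\exp(e^l_{i,k})},
\]
where the attention logits have the standard form $e^l_{i,j} = \mathrm{LeakyReLU}\bigl(\mathbf{a}^\top[\mW^l\mX^l_i\,\|\,\mW^l\mX^l_j]\bigr)$. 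First I would write this out explicitly and then observe that every row $\mX^l_j$ appearing in either the weighted sum or in the softmax denominator is indexed by some $v_j \in \mathcal{N}^1_{\gG}(v_i)$. It follows immediately that $\mX^{l+1}_i$ does not depend on $\mX^l_j$ for any $v_j \notin \mathcal{N}^1_{\gG}(v_i)$, matching \cref{assmp:dependence}.

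The second step is to lift the argument to multi-head attention. Each of the $H$ heads carries its own parameters $(\mW^{l,h}, \mathbf{a}^{l,h})$, yet every head still aggregates and normalizes over the same neighborhood $\mathcal{N}^1_{\gG}(v_i)$. Hence the single-head argument applies head-by-head, and the final concatenation (or averaging) over heads preserves the locality property.

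The only delicate point, rather than a genuine obstacle, will be making explicit that the \emph{domain} of the softmax is exactly $\mathcal{N}^1_{\gG}(v_i)$: if any non-neighbor were allowed to enter this normalization, it would induce an indirect dependence on $\mX^l_j$ even when $v_j$ does not appear in the weighted sum. A small side remark is also needed for the conventional inclusion of self-loops, i.e.\ treating $v_i$ as an element of its own $1$-hop neighborhood, which is fully consistent with \cref{assmp:dependence}.
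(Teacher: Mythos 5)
Your proposal is correct and follows essentially the same route as the paper's proof: write out the single-head GAT update, note that both the weighted sum and the softmax normalization for the attention coefficients range only over $v_j \in \mathcal{N}^1_{\gG}(v_i)$, and conclude the required locality. Your additional remarks on multi-head concatenation and the self-loop convention are sound refinements that the paper leaves implicit, but they do not change the argument.
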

	\begin{proof}
		For GAT models, the formula for computing the $i^\text{th}$ embedding at layer $l+1$, $\mX^{l+1}_i$, can similarly be expressed as:
		\begin{equation*}
			\mX^{l+1}_i = \sigma\left(\sum_{v_j \in \mathcal{N}^1_{\gG}(v_i)} \mA^{l}	_{i,j} \mathbf{W}^{l} \mX^{l}_j\right)
		\end{equation*}
		where $\mA^{l}_{i,j}$ is the attention coefficient between $v_i$ and $v_j$ at layer $l$ given by:
		\begin{equation*}
		\mA^{l}_{i,j} = \frac{\exp\left(\text{LeakyReLU}\left((\mathbf{a}^	{l})^T [\mathbf{W}^l \mX^{l}_i \, \| \, \mathbf{W}^{l} \mX^{l}_j]	\right)\right)}{\sum_{v_k \in \mathcal{N}^1_{\gG}(v_i)} \exp\left(\text	{LeakyReLU}\left((\mathbf{a}^{l})^T [\mathbf{W}^{l} \mX^{l}_i \, \| 	\, \mathbf{W}^{l} \mX^{l}_k]\right)\right)}
		\end{equation*}
		and $a^l$ is a vector of attention parameters.  
		
		As $\mX^{l+1}_i$ only depends on $\mW^l$,$\mA^{l}_{i,j}$ and $\mX^{l}_j$ to be calculated for $v_j\in\mathcal{N}^1_{\gG}(v_i)$, and $\mA^{l}_{i,j}$ itself also only depends on $\mX^{l}_j$ and $\mW^l$, we show that $\mX^{l+1}_i$ is independent on $\mX^{l}_j$ for $v_j\notin\mathcal{N}^1_{\gG}(v_i)$.
	\end{proof}
\subsection{Gluing algorithm}
\label{app:gluing}
In order to establish the gluing operation, we need to first define how we determine which nodes in $\gG$ and $\gG^B$ can potentially match, so that the building block is glued correctly. This can be achieved recursively, by ensuring that:
\begin{itemize}
	\item For every pair of matched nodes, their features must be exactly equal.
	\item The center of $\gG^B$ - $c^B$ matches the center of attaching $c$.
	\item For a node $v\in V$ of $\gG$ and its match $v_B\in V^B$, every one of its neighbors $v'$ ($(v,v')\in E$) has to match a neighbor $v'_B$ of $v_B$ ($(v_B,v'_B)\in E^B$).
\end{itemize}

This recursive definition can be satisfied by exploring the possible matchings from the center $c$ outwards. In particular, we explore the $i$-hop neighborhoods one by one for $i=1,2,\cdots,l$. We try all possible matchings $v_B$ for the given node $v$, such that their features match, and the matches of its neighbors (in particular those which have already been traversed) are also connected to $v_B$ (Line \ref{alg:extend_match}).

After all possible matchings are generated, it is easy to substitute any nodes in the original graph with those in the building block, in order to obtain the set of all possible gluings (Lines \ref{alg:new_edges_start}--\ref{alg:new_edges_end})
\begin{algorithm}[H]
	\caption{Gluing a graph with a building block}
	\label{alg:gluing}
	\begin{algorithmic}[1]
		\Function{Glue}{$\gG$, $\gG^B$, $c$, $l$ } 
		\State $\hat{M}\gets\{\{(c, c^B)\}\}$

		\For{$i=1,2,\cdots,l$}		\Comment{Generate all correct matchings}
			\For{$v\in\{v|\dist{c,v}=i\}$}
				\State $\hat{M}_{\text{new}}\gets\{\}$
				\For{$\mathcal{M}\in\hat{M}$}
				\State $\hat{M}_{\text{new}}\gets\hat{M}_{\text{new}}\times\{(v, v_B)|\mX_{v} = \mX_{v_B} \land \cap_{(v, v') \in E,(v', v'_B)\in \mathcal{M}} (v_B, v'_B) \in E^B\}$\label{alg:extend_match}
				\EndFor
				\State $\hat{M}\gets\hat{M}_{\text{new}}$
			\EndFor
		\EndFor

		\State $\mathbb{G}\gets\{\}$
		\For{$\mathcal{M}\in\hat{M}$} \Comment{Transform matchings into valid graphs}
			\State $\hat{V}\gets V\setminus\mathcal{N}^l_\gG(c) \cup V_B$
			\State $\hat{E}\gets E^B$
			\For{$(v, v')\in E$}\label{alg:new_edges_start}
				\State $v = v_B \text{ if } \exists v_B. (v, v_B)\in \mathcal{M} \text{ else } v$
				\State $v' = v'_B \text{ if } \exists v'_B. (v', v'_B)\in \mathcal{M} \text{ else } v'$
				\State $\hat{E} \gets \hat{E} \cup \{(v, v')\}$
			\EndFor\label{alg:new_edges_end}
			
			\State $\mathbb{G} \gets \mathbb{G} \cup \{(\hat{V}, \hat{E})\}$
		\EndFor
		\State \Return $\mathbb{G}$
		\EndFunction
	\end{algorithmic}
\end{algorithm}

\subsection{Feature-by-feature filtering algorithm}
\label{app:fbyf}

	\begin{algorithm}[H]
		\caption{Filtering nodes feature-by-feature}
		\label{alg:fbyf}
		\begin{algorithmic}[1]
			\Function{FilterNodes}{$\{\mathcal{F}_i\}_{i\in\{1,\cdots,m\}}$ ,$\grad{\mW_0}$, $\tau$} 
			\State \{$\mathcal{T}^{*}_{0,i}\}_{i\in\{1,\cdots,m\}}, d_{sum}\leftarrow\{\emptyset\}_{i\in\{1,\cdots,m\}}$, $|\mathcal{F}_1|$
			\For{$k\in \{1,\cdots,m\}$}
				\State $\mathcal{T}_{0,k}^*\leftarrow\mathcal{T}_{0,k-1}^* \times \mathcal{F}_{k}$
				\State $d_{sum} \leftarrow d_{sum}+\lvert\mathcal{F}_{k}\rvert$
				\If{$d_{sum} > \rank(\grad{\mW_0})$} \Comment{This is a requirement for the filtering to work}
					\State $\mathcal{T}_{0,k}^*\leftarrow$ \Call{Filter}{$\mathcal{T}_{0,k}^*, \grad{\mW_0}[:d_{sum}],\tau,\lambda v. \mX_v^0$}
				\EndIf
			\EndFor
			\State \Return $\mathcal{T}^{*}_{0,m}$ 
			\EndFunction
		\end{algorithmic}
	\end{algorithm}

Here we describe how we build and filter 0-hop neighborhoods, i.e. the single node client feature vectors. While DAGER directly enumerates and filters all possible text input feature vectors, for many graph data applications this procedure is impractical. To this end, we instead recover the node features one-by-one, as described next.

For the first feature, we generate all partial input vectors associated with the values in $\mathcal{F}_1$ and then filter them to create the set of consistent partial input vectors $\mathcal{T}_{0,1}^*$. To do so, we apply our span-check on the truncated input layer gradients $\frac{\partial \mathcal{L}}{\partial W^0}\left[:|\mathcal{F}_1|\right]$ corresponding to the input entries of the first feature. This is possible since, by \cref{thm:spancheck_ours}, the truncated $i$-th row of $\mX$, $\mX\left[i, :|\mathcal{F}_1|\right]$, is in $\ColSpan(\frac{\partial \mathcal{L}}{\partial W^0}\left[:|\mathcal{F}_1|\right])$. 
For each subsequent feature, we similarly filter the set of vectors corresponding to combinations between a partially recovered vector in $\mathcal{T}_{0,i-1}^*$ and a value in $\mathcal{F}_i$ by applying \cref{thm:spancheck_ours} to obtain the set of consistent partial input vectors up to feature $i$ $\mathcal{T}_{0,i}^*$. Finally, we obtain $\mathcal{T}_0^* = \mathcal{T}_{0,m}^*$.
A pseudocode for this approach is provided in \cref{alg:fbyf}.

\subsection{Structure-based filtering algorithm}
\label{app:str_filter}
\begin{algorithm}[H]
    \caption{Structure filtering algorithm}
    \label{alg:structure_filter}
    \begin{algorithmic}[1]
        \Function{StructureFilter}{$\mathcal{T}_{L}^*$, $\grad{\mW}$} 
        \For{$\gG\in\mathcal{T}^{*}_{L}$}
            \If{$\Delta_\gG == 0$}
                \State \Return $\gG$
            \EndIf
            \State $CanGlue\gets True$
            \For{$v\in V$}    
                \If{$!\exists\gG^B\in\mathcal{T}^{*}_{L}.\glue{\gG, \gG^B, v}$}
                    \State $CanGlue \gets False$
                    \State \textbf{break}
                \EndIf
            \EndFor
            \If{$CanGlue$}
                \State $\mathcal{T}^*_B\gets\mathcal{T}^*_B\cup\{\gG\}$
            \EndIf
        \EndFor
        \State \Return $\mathcal{T}_{B}^*$
        \EndFunction
    \end{algorithmic}
\end{algorithm}

\subsection{Depth-First Search implementation}
We present the pseudocode for branching the DFS tree:
\label{app:branching}
\paragraph{Cycles exploration}
After every step of gluing possible building blocks from $\mathcal{T}_{B}^{*}$ to $\gG_{\text{curr}}$ at node $v$ (line \ref{glue_bb} in \cref{alg:dfs_br}), we enumerate all sets $S$ of pairs $(v_1 \in \gG_{\text{curr}}, v_2 \not\in \gG_{\text{curr}})$ of vertices of any graph $\gG' \in \mathbb{G}$ such that the features of $v_1$ and $v_2$ match (line \ref{enum_pairs} in \cref{alg:dfs_br}). For every such set $S$, we additionally consider for exploration the graph $\hat{\gG'} = \overlap{\gG', S}$ created by overlapping each pair of vertices in $S$ (line \ref{add_coincided} in \cref{alg:dfs_br}). Intuitively, this allows us to reconstruct any cycle of the input graph. All valid overlaps, including those with $S=\emptyset$, where the graph remains unchanged, are then added as branches of the search space (line \ref{branch} in \cref{alg:dfs}).
	\begin{algorithm}[H]
		\caption{Generate graphs for branch}
		\label{alg:dfs_br}
		\begin{algorithmic}[1]
			\Function{Branch}{$\mathcal{T}^{*}_{B}$, $\gG_{\text{curr}}$, $v$}
    		\State  $\mathbb{G}_{\text{new}} \leftarrow \{\}$
    		\State 
    		\For{$\gB \in \mathcal{T}^{*}_{B}$}
    			\State $\mathbb{G} \leftarrow \glue{\gG_{\text{curr}}, \gB, v}$ \label{glue_bb}
    			\If{$\mathbb{G} \neq \emptyset$}
    				\State $\mathbb{G}_{\text{new}} \leftarrow \mathbb{G}_{\text{new}} \cup \mathbb{G}$
    				\For{$\gG' \in \mathbb{G}$}
    					\For{$S\subseteq \{V(\gG')\setminus V(\gG_{\text{curr}})\}\times V(\gG_{\text{curr}})$}{\label{enum_pairs}}
    						\State$\mathbb{G}_{\text{new}}\leftarrow \mathbb{G}_{\text{new}} \cup \{\overlap{\gG', S}\}$ \label{add_coincided}
    					\EndFor
    				\EndFor
    			\EndIf
    		\EndFor
    		\State
    		\State\Return $\mathbb{G}_{\text{new}}$
    		\EndFunction
		\end{algorithmic}
	\end{algorithm}

\paragraph{Node overlaping}
In this paragraph, we describe how the overlaping function (used in line \ref{add_coincided} in \cref{alg:dfs_br}) works. Given a graph $\gG'$ and a set of pairs of nodes $S$, if some pair has two nodes with different node feature vectors, the function returns an empty graph. Otherwise, for each pair $(v_1,v_2)\in S$ we modify the current graph by deleting $v_2$ and connect add an edge between $v_1$ and any node that was previously a neighbor of $v_2$ but not $v_1$. After doing these overlaps for every pair of nodes in $S$, the new graph is returned.
\section{The GSM metric}
\label{app:gsm_disc}
We designed the GSM set of metrics so that they satisfy the following three qualities:
\begin{itemize}
	\item The metric should be efficiently computable in polynomial time
	\item It should capture both structural and feature-wise information
	\item Isomorphic graphs should be guaranteed to achieve a 100$\%$ score
\end{itemize}

To this end, we define $\text{\metric{N}}_F(\gG, \hat{\gG})$ under a set of functions $F = \{F_k\}_{k=1}^{N}$, where for all $k$   $F_{k}:\gG\rightarrow\mathbb{R}^{|\mathcal{V}|\times d}$ is a function that aggregates the feature vectors for each $k$-hop neighborhood. This allows us to measure the similarites in features across increasingly larger subgraphs, which capture the structure around each node. In our case we utilise a randomly initialised $\geq k$-layer GCN to achieve such a mapping.

We note that a precise evaluation of the metric requires for us to match the 2 graphs as accurately as possible. Since exact matching of graphs has no known polynomial-time algorithm \citep{babai2016graph}, we match the graph nodes by applying the Hungarian matching algorithm \citep{frank2005kuhn} for minimizing a cost function $\mC$ that captures the feature difference across (0-5)-hop neighborhoods:
$$\mC_{ij} = \sum^2_{k=0}\sum^d_{m=1} (F_k(\gG) - F_k(\hat{\gG}))_m^2$$

We can hence define:
$$\text{\metric{N}}_F(\gG, \hat{\gG}) = \begin{cases}
	\text{F1-Score}(F_N(\gG), F_N(\hat{\gG}))& \text{if }F_N(\gG)\text{ - discrete}\\
	R^2(F_N(\gG), F_N(\hat{\gG}))& \text{if }F_N(\gG)\text{ - continuous}\\
\end{cases}$$

First of all, the NP-complete nature of the subgraph isomorphism problem makes it difficult to do any subgraph or full-graph matching, which we tackle by utilising the hidden states of a GCN to create an approximate matching between nodes. The method described above ensures that for isomorphic graphs, the correct k-hop neighborhoods will be matched and the \metric{k} metric will achieve a perfect score.

The second requirement is rarely satisfied by metrics defined in the literature (i.e. the edit distance), as comparison studies on coloured graphs are limited.
Our solution to these problems was inspired by the ROUGE set of metrics~\citep{rouge}, used for evaluation of textual similarity. Instead of comparing sequences such as unigrams or bigrams like ROUGE, we instead compute continuous properties of graphs on the scale of different k-hop neighbourhoods. This rationale allows us to compare both node-based and structural properties using a simple methodology.
\section{Additional Experiments}

\label{app:experiments}
Here we present additional experiments that are not part of the main text.

\subsection{Human evaluation for the GSM set of metrics}
We performed a human evaluation, where 3 experts in Graph Theory and Chemistry were shown 120 sample reconstructions of molecules, as given by DLG and \tool. The samples were shuffled, and the participants were tasked to assign a score from 0 to 10, with the following instructions:

"Thank you for agreeing to participate in this study on the quality of graph reconstructions! We have gathered a set of graphs, coupled with the best-effort reconstruction. Please give each pair a score of 0-10, where 0 is a complete lack of similarity, and 10 is a perfect match. When assigning a score, take into account the \emph{structure} of the two graphs, as well as the \emph{atom type} for matching atoms, and also be wary that 2 graphs might be \emph{isomorphic}, but have different pictures. Please disregard the connections between atoms, as the methods we used do not recover any edge properties. Give your, as best as possible, score on how similar the graphs are with respect to these properties."

Reconstructed samples from both \tool and DLG were shuffled and anonymized before being presented to the participants. We report the average scores for each algorithm, multiplied by a factor of 10 to match the order of magnitude of the GRAPH metrics, and present the results in \cref{tab:study}. We observe very good correlation between our metrics and the reported human scores, even though our metrics are slightly more lenient to completely wrong reconstructions, compared to the evaluators. This leniency provides a slight advantage to the baseline attacks when measured using our metrics, as the baselines fail catastrophically more often.

Based on these studies, we also show in \cref{tab:study_comparison} that our partial reconstructions are deemed more significant than what the metric suggests, likely meaning that there are examples which present significant information leakage. In contrast, high-scoring examples from the DLG attacks have been rated as essentially uninformative.

\begin{table}[!hbt]
    \centering
    \caption{Score discrepancy examples between human evaluators and the GRAPH set of metrics. G-1 stands for the GRAPH-1 metric.}
    \label{tab:study_comparison}
    \resizebox{\linewidth}{!}{%
    \begin{tabular}{>{\centering\arraybackslash}m{0.12\textwidth} >{\centering\arraybackslash}m{0.12\textwidth} c c >{\centering\arraybackslash}m{0.12\textwidth} >{\centering\arraybackslash}m{0.12\textwidth} c c}
        \textbf{GT} & \textbf{GRAIN} & \textbf{G-1} & \textbf{Study} & \textbf{GT} & \textbf{DLG} & \textbf{G-1} & \textbf{Study} \\
        \hline
        \includegraphics[width=0.12\textwidth]{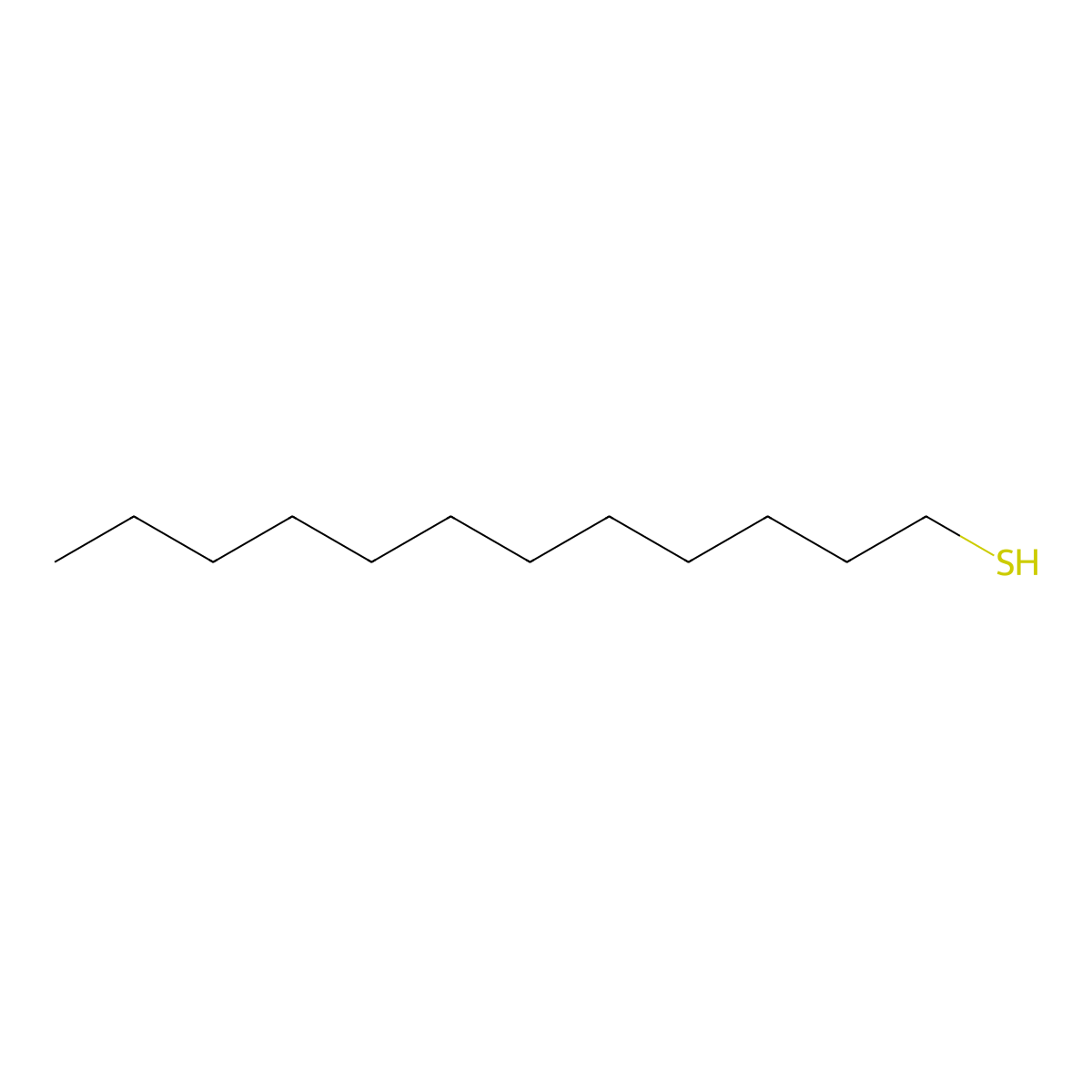} & 
        \includegraphics[width=0.12\textwidth]{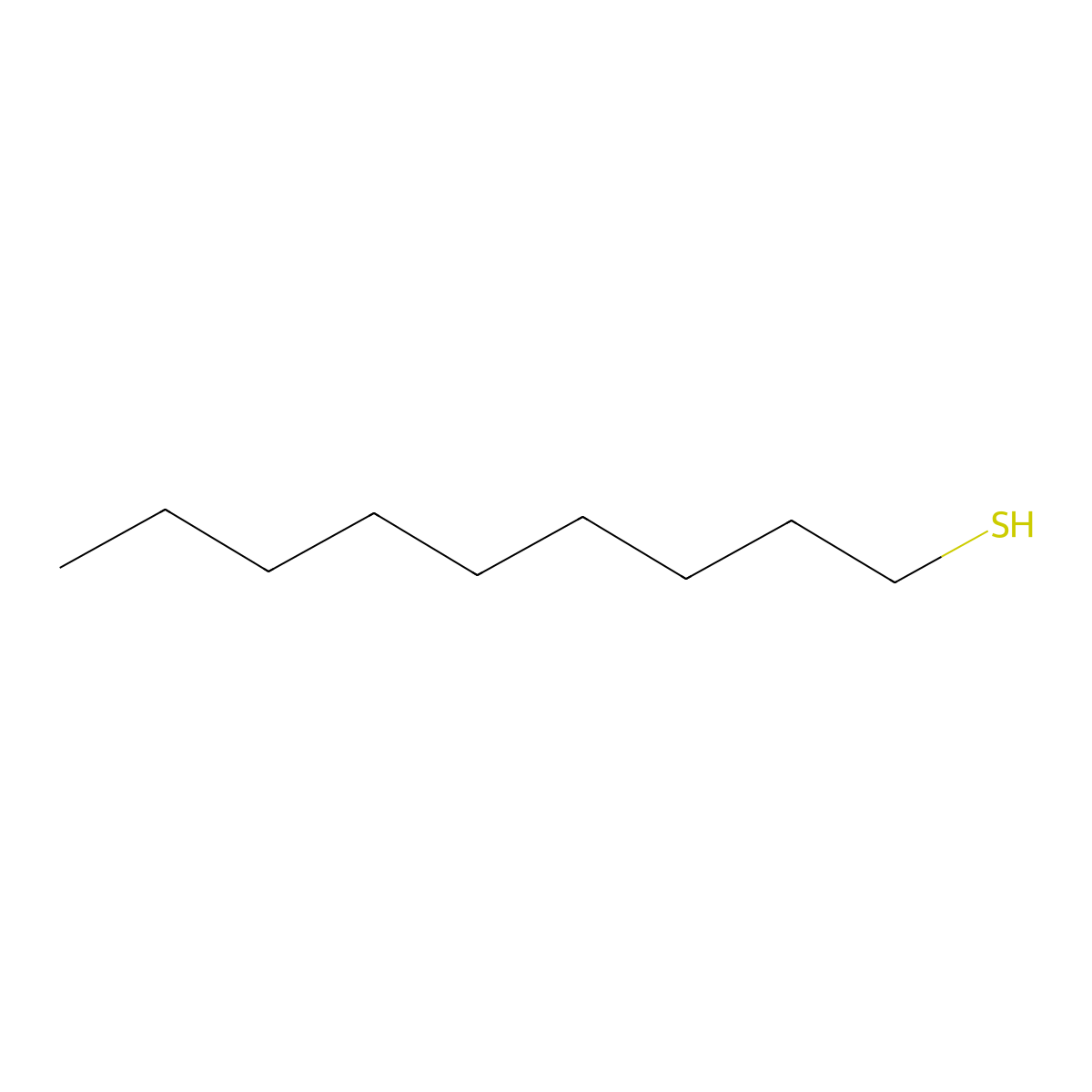} & 
        62.0 & 
        93.3 & 
        \includegraphics[width=0.12\textwidth]{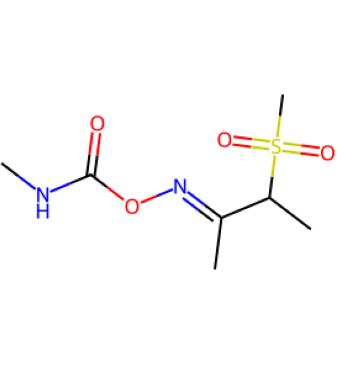} & 
        \includegraphics[width=0.12\textwidth]{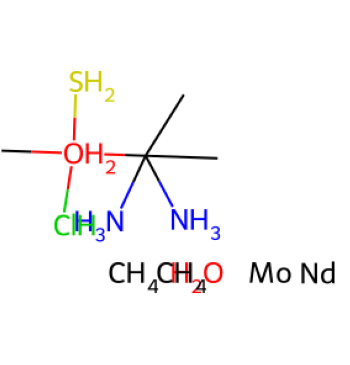} & 
        52.7 & 
        10.0 \\
        \includegraphics[width=0.12\textwidth]{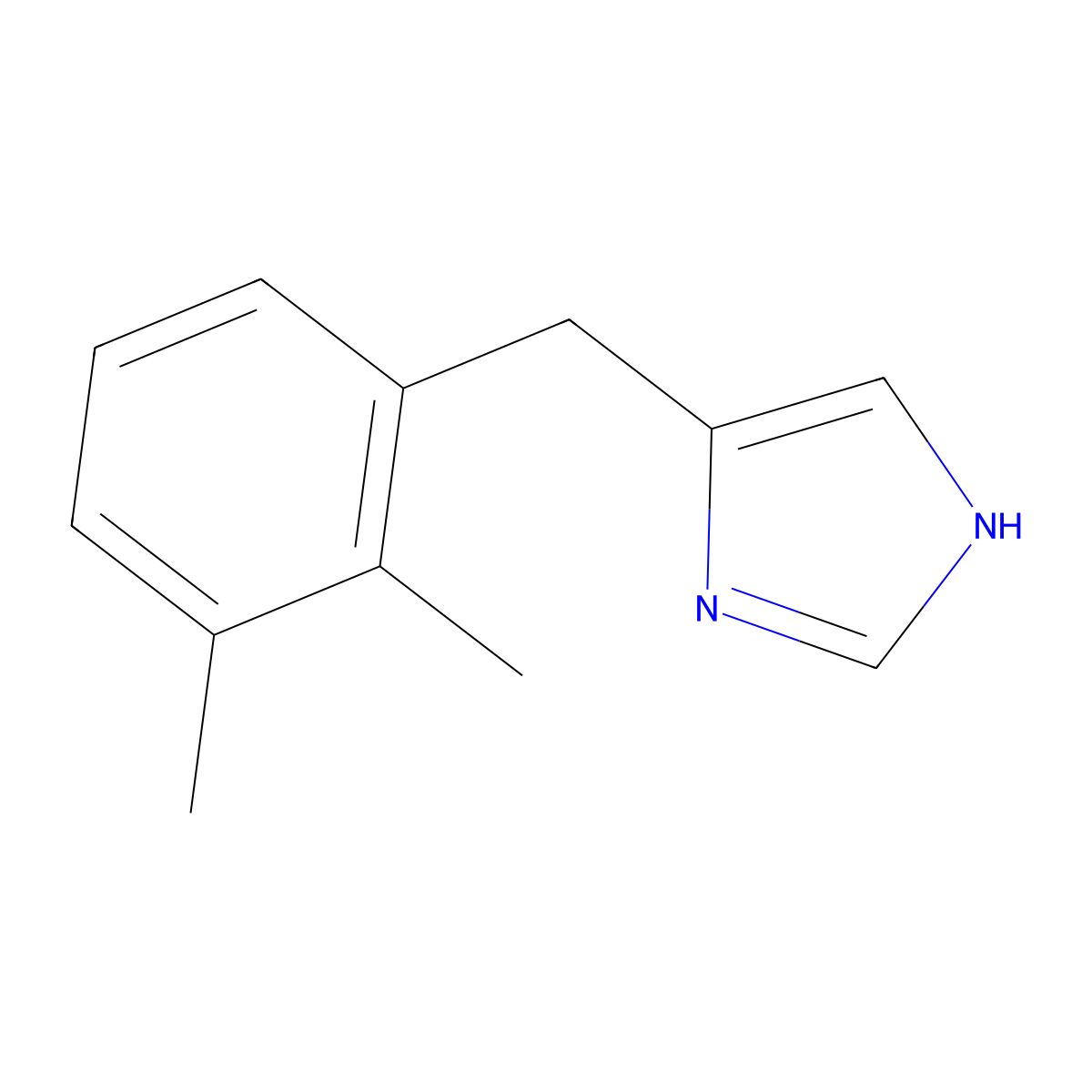} & 
        \includegraphics[width=0.12\textwidth]{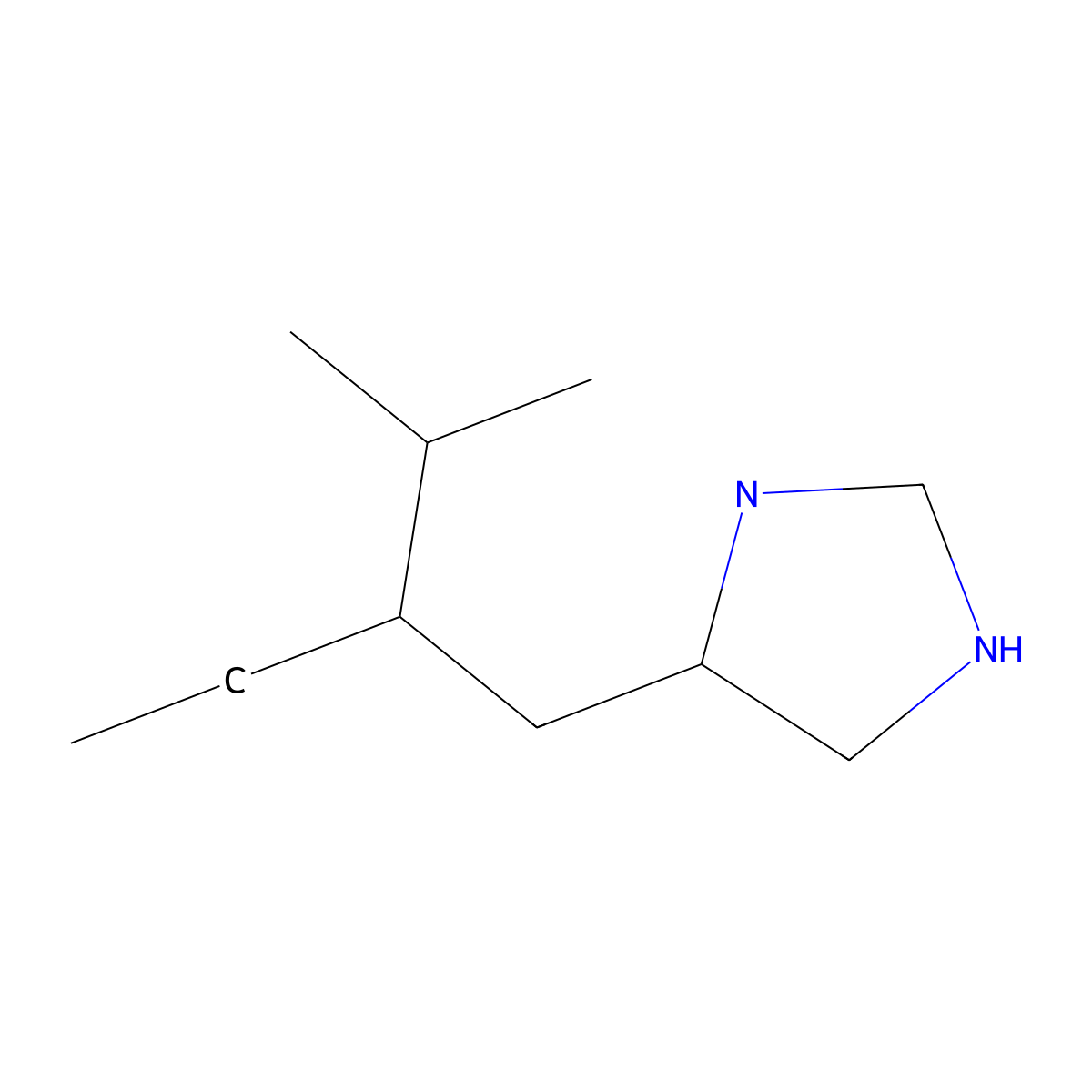} & 
        41.3 & 
        63.3 & 
        \includegraphics[width=0.12\textwidth]{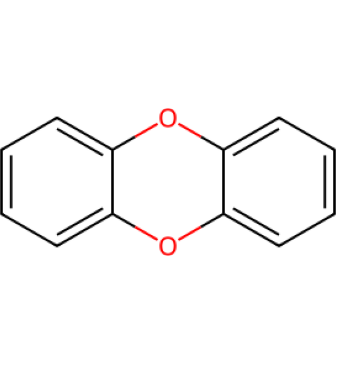} & 
        \includegraphics[width=0.12\textwidth]{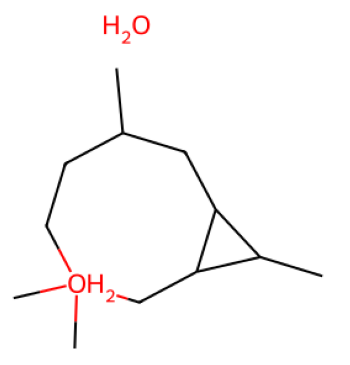} & 
        61.0 & 
        23.3 \\
        \includegraphics[width=0.12\textwidth]{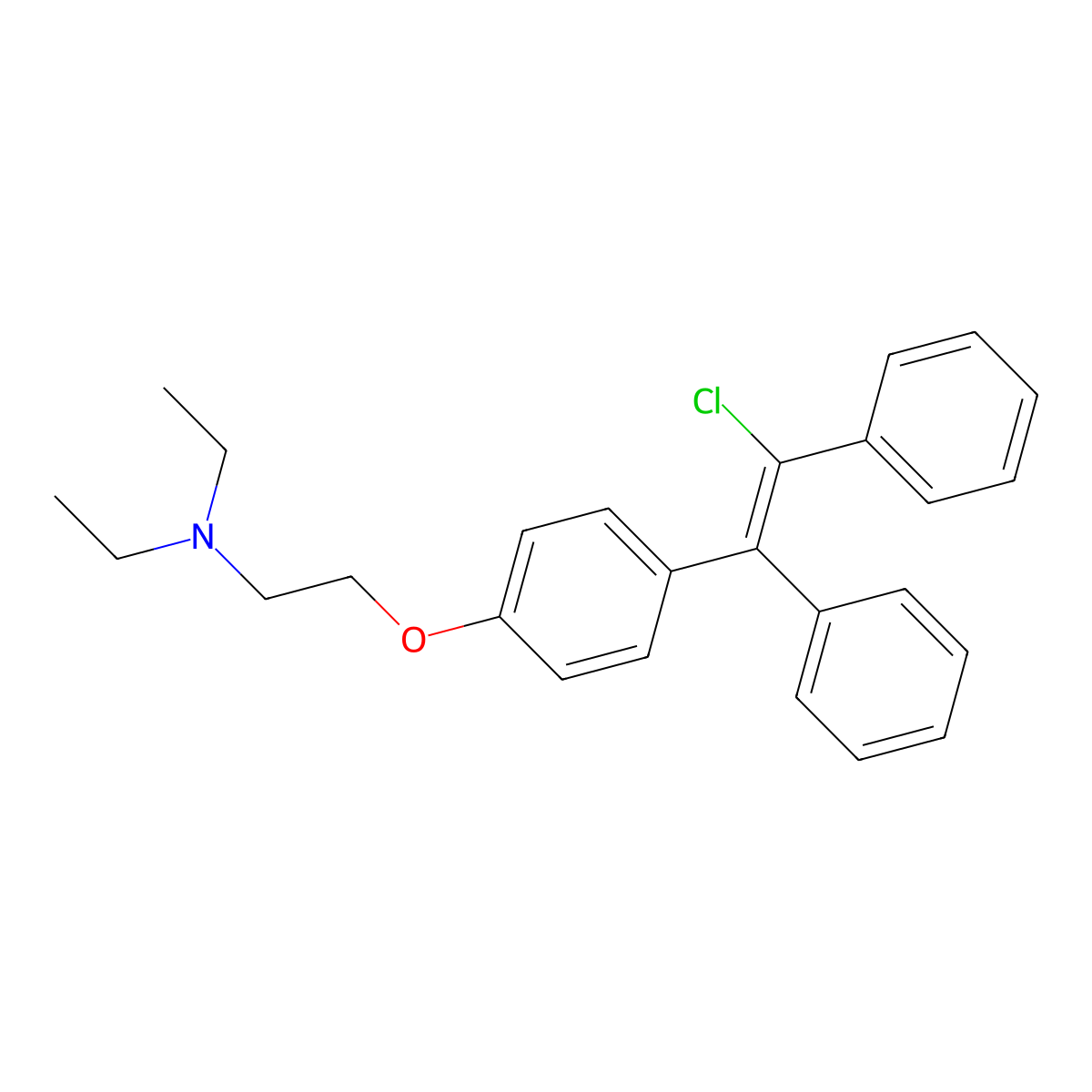} & 
        \includegraphics[width=0.12\textwidth]{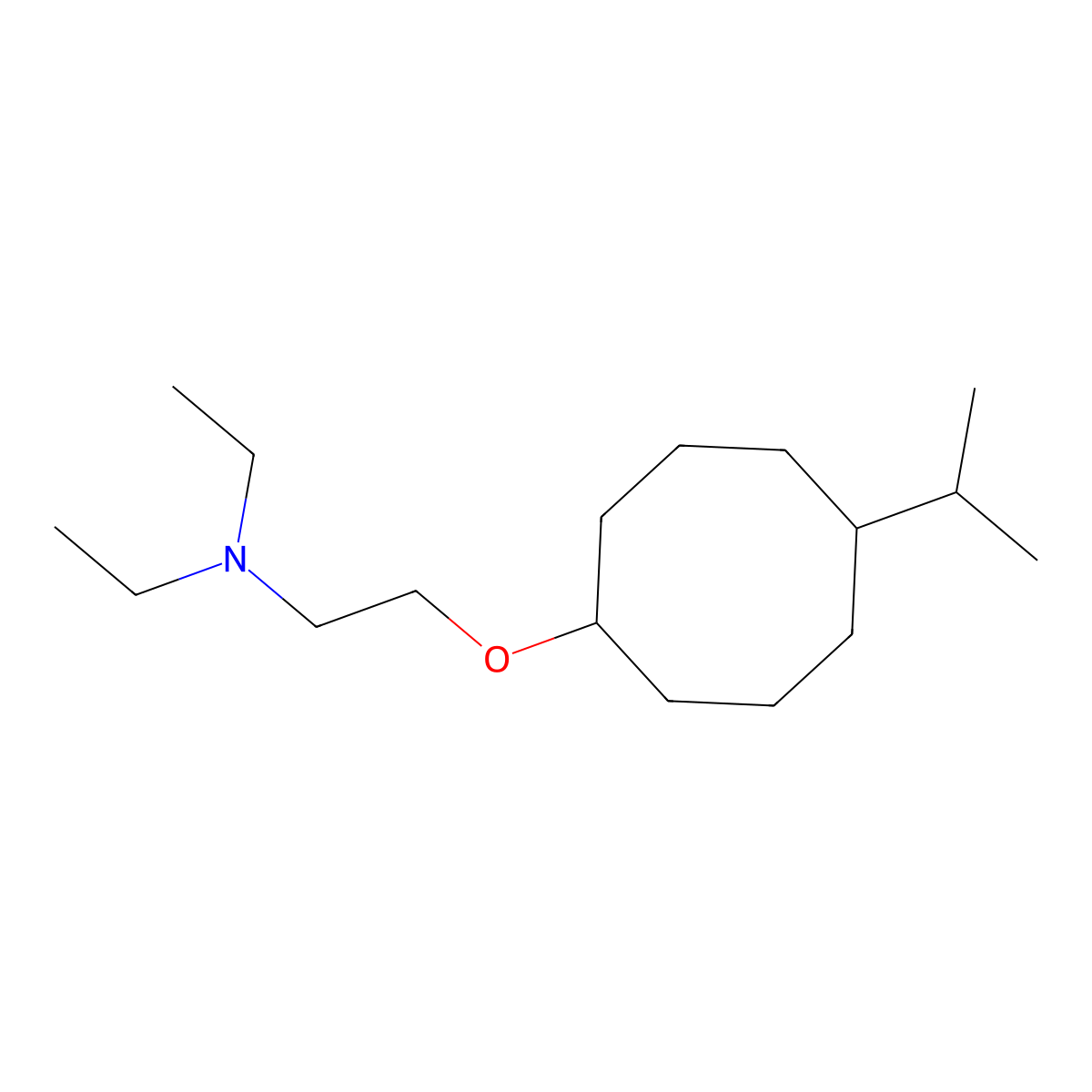} & 
        33.5 & 
        56.7 & 
        \includegraphics[width=0.12\textwidth]{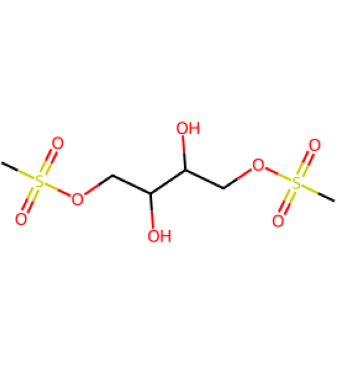} & 
        \includegraphics[width=0.12\textwidth]{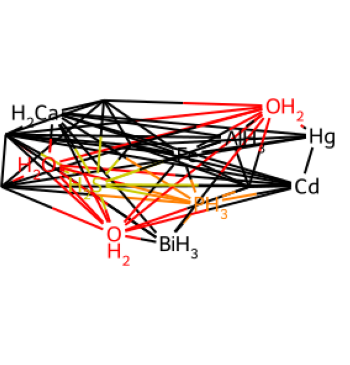} & 
        41.0 & 
        0.0 \\
        \hline
    \end{tabular}}
\end{table}

Additional examples of molecule reconstructions comparing \tool, DLG, and TabLeak are shown in \cref{fig:example_new}. In this set of examples, the first 3 columns show the exact reconstruction of the input. We also highlight that in cases where \tool does not managed to recover the entire graph, the attack can reconstruct subgraphs of the input (4th column), and a more realistic approximation otherwise (5th column).

\begin{figure}[h]
    \begin{center}
    	\includegraphics[width=\linewidth]{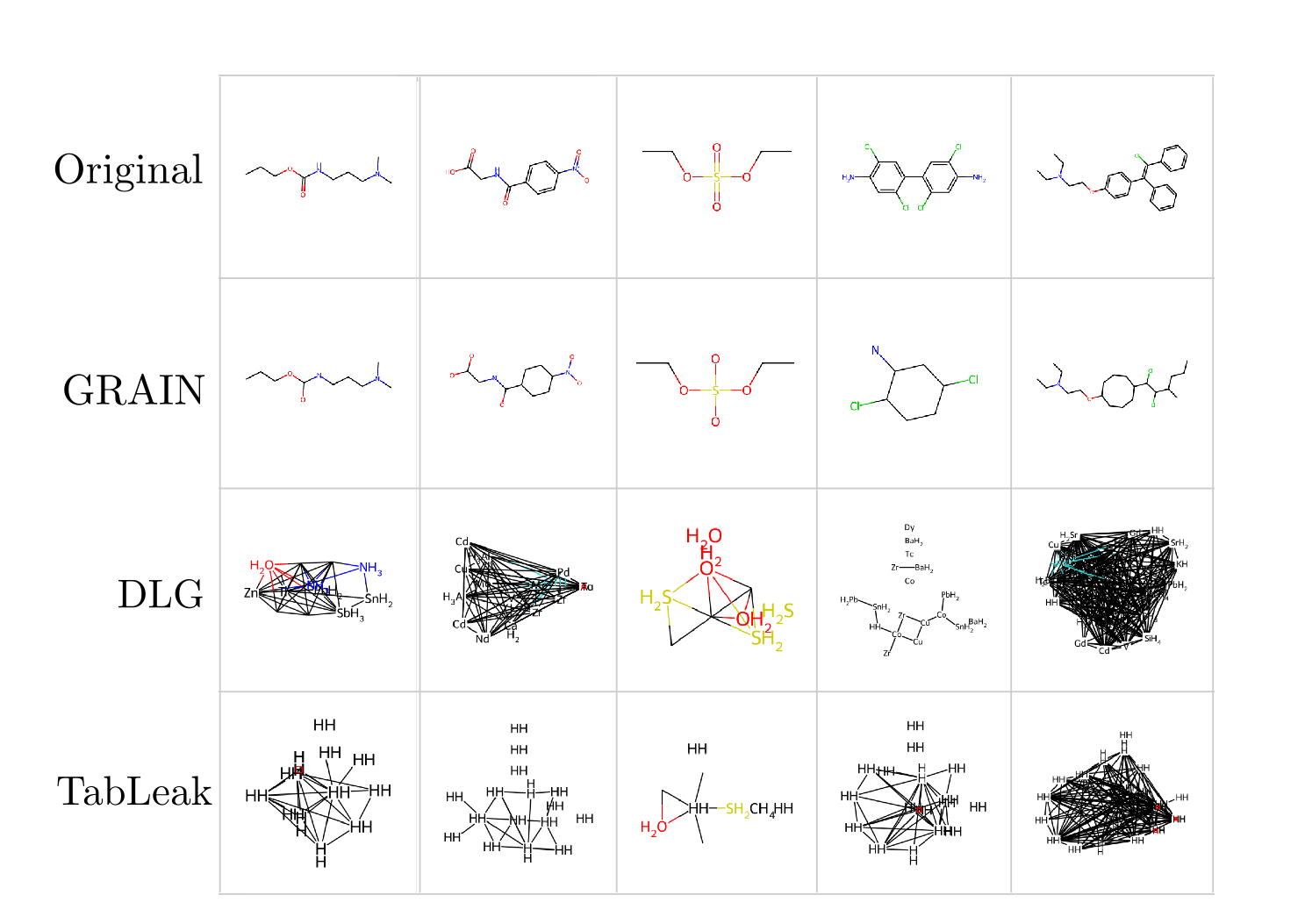}
	\end{center}
	\caption{Examples of molecule reconstructions compared between \tool, DLG, and TabLeak.}
	\label{fig:example_new}
	
\end{figure}

\subsection{Additional Chemical Datasets}
\label{app:chemical}

\begin{table*}[!hbt]\centering
	
	\caption{Results (in \%) of main experiments on 3 biochemical datasets -- Tox21, Clintox, BBBP. Here "$+A$" refers to the baseline attack with the input adjacency matrix given.} \label{table:chemical_main}
	\renewcommand{\arraystretch}{1.2}
	
	\newcommand{\temp}[1]{\textcolor{red}{#1}}
	\newcommand{\noopcite}[1]{} 
	
	\newcommand{\skiplen}{0.01\linewidth} 
	\newcommand{\rlen}{0.01\linewidth} 
	\newcolumntype{R}{>{$}r<{$}}
	\begin{tabular}{@{}c l RRRRRR @{}} \toprule
			&& \multicolumn{1}{c}{\text{\metric{0}}} & \multicolumn{1}{c}{\text{\metric{1}}}  & \multicolumn{1}{c}{\text{\metric{2}}} & \multicolumn{1}{c}{\text{FULL}} & \multicolumn{1}{c}{\text{Runtime[h]}}
			\\
			\midrule
			\multirow{5}{*}{Tox21}

			& \tool & \mathbf{86.9^{+4.2}_{-5.7}} & \mathbf{83.9^{+5.2}_{-6.9}} & \mathbf{82.6^{+5.7}_{-7.4}} &  \mathbf{68.0\pm1.7} & 14.3 \\ 
			& DLG & 31.8^{+4.5}_{-4.3} & 20.3^{+5.5}_{-4.8} & 22.8^{+6.6}_{-5.6} &  1.0\pm0.2 & 3.3\\
			& DLG $+A$ & 54.7^{+3.9}_{-4.2} & 60.1^{+4.6}_{-5.2} & 76.7^{+3.6}_{-4.8} &  1.0\pm0.2 & \mathbf{3.1}  \\
			& TabLeak & 25.1^{+5.1}_{-4.3} & 12.4^{+5.5}_{-4.3} & 10.8^{+5.6}_{-3.9} &  1.0\pm0.2 & 13.1
			\\
			& TabLeak $+A$ & 55.6^{+3.9}_{-3.9} & 57.7^{+4.1}_{-4.6} & 73.8^{+2.8}_{-3.5} & 1.0\pm0.2 & 12.3
			\\
			
			\midrule
			\multirow{5}{*}{Clintox}
			
			& \tool & \mathbf{73.7^{+5.7}_{-6.5}} & \mathbf{68.4^{+6.7}_{-7.8}} & 66.8^{+7.0}_{-7.6} &  \mathbf{36.0\pm1.2} & 24.1
			\\ 

			& DLG & 24.0^{+4.1}_{-3.8} & 10.3^{+4.8}_{-3.6} & 12.2^{+5.5}_{-4.2} & 
			 1.0\pm0.2 & 3.5\\
			& DLG $+A$ & 52.5^{+3.2}_{-3.6} & 52.6^{+4.1}_{-4.7} & \mathbf{72.3^{+3.2}_{-3.9}} &  1.0\pm0.2 & \mathbf{3.2}\\
			& TabLeak & 17.6^{+3.7}_{-2.8} & 6.0^{+4.0}_{-2.4} & 5.4^{+4.2}_{-2.5} & 1.0\pm0.2 & 15.2\\
			& TabLeak $+A$ & 54.0^{+3.4}_{-3.3} & 52.0^{+3.8}_{-4.2} & 62.8^{+3.3}_{-4.2} & 1.0\pm0.2 & 14.5  \\

			\midrule
			\multirow{5}{*}{BBBP}

			& \tool & \mathbf{71.7^{+5.9}_{-6.8}} & \mathbf{66.8^{+6.9}_{-7.7}} & 64.9^{+7.2}_{-8.0} & \mathbf{38.0\pm1.2} & 23.7 \\ 
			& DLG & 22.6^{+3.6}_{-3.3} & 8.8^{+4.9}_{-3.2} & 10.0^{+5.3}_{-3.7} &  0.0\pm0.0 & 3.9\\
			& DLG $+A$ & 51.6^{+3.1}_{-3.6} & 50.1^{+3.8}_{-4.5} & 70.6^{+3.1}_{-4.2} &  0.0\pm0.0 & \mathbf{3.1}\\
			& TabLeak & 17.6^{+3.8}_{-2.8} & 6.3^{+3.8}_{-2.5} & 4.7^{+3.7}_{-2.3} &  0.0\pm0.0 & 12.6
			\\
			& TabLeak $+A$ & 59.1^{+3.1}_{-3.6} & 59.4^{+3.6}_{-4.3} & \mathbf{71.9^{+2.9}_{-4.0}} & 0.0\pm0.0 & 12.5
			\\
			
			\bottomrule
	\end{tabular}
	
\end{table*}

In this section, we present our results on additional chemical datasets, namely Clintox and BBBP~\citep{wu2018moleculenet}. We highlight in \cref{table:chemical_main} that \tool generalizes across all settings, retaining its increased performance over the baseline attacks. We reaffirm that we achieve these results despite running for time comparable to the one of Tableak. 

\subsection{Additional Ablation Studies}
\label{app:ablation}
We perform additional ablation studies on various assumptions and parameters, demonstrating their effects on \tool. In particular, we observe how architectural changes might affect our performance, or how properties of the data might influence reconstructability.

\begin{table*}[!hbt]\centering
	
	\caption{Results (in \%) of main experiments with the LBFGS optimizer. Here "$+A$" refers to the baseline attack with the input adjacency matrix given.} \label{table:main_vs_baseline_lbfgs}
	\renewcommand{\arraystretch}{1.2}
	
	\newcommand{\temp}[1]{\textcolor{red}{#1}}
	\newcommand{\noopcite}[1]{} 
	
	\newcommand{\skiplen}{0.01\linewidth} 
	\newcommand{\rlen}{0.01\linewidth} 
	\newcolumntype{R}{>{$}r<{$}}
	\resizebox{\linewidth}{!}{
		\begin{tabular}{@{}c l RRRRR p{\skiplen} RRRRR @{}} \toprule
			&& \multicolumn{5}{c}{\text{GCN}} && \multicolumn{5}{c}{\text{GAT}}\\
			\cmidrule(l{5pt}r{5pt}){3-7} \cmidrule(l{5pt}r{5pt}){9-13} 
			&& \multicolumn{1}{c}{\text{\metric{0}}} & \multicolumn{1}{c}{\text{\metric{1}}}  & \multicolumn{1}{c}{\text{\metric{2}}} & \multicolumn{1}{c}{\text{FULL}} & \multicolumn{1}{c}{\text{Min/Rec}}
			&& \multicolumn{1}{c}{\text{\metric{0}}} & \multicolumn{1}{c}{\text{\metric{1}}}  & \multicolumn{1}{c}{\text{\metric{2}}} & \multicolumn{1}{c}{\text{FULL}} & \multicolumn{1}{c}{\text{Min/Rec}} \\ \midrule
			
			\multirow{5}{*}{CiteSeer}
			&\tool & 62.5^{+7.7}_{-8.2} & \mathbf{31.0^{+8.0}_{-7.8}} & \mathbf{31.6^{+8.1}_{-8.1}} & \mathbf{20.0\pm{0.8}} & \mathbf{1.5} && \mathbf{79.3^{+4.7}_{-6.3}} & \mathbf{69.1^{+6.1}_{-6.4}} & \mathbf{69.6^{+6.2}_{-6.0}} & \mathbf{61.0\pm{1.6}} & \mathbf{0.8}\\
			& DLG & \mathbf{67.7^{+3.9}_{-3.7}} & 0.0^{+0.0}_{-0.0} & 0.3^{+0.5}_{-0.3} & 0.0\pm{0.0} & 24.8 && 67.7^{+3.9}_{-3.7} & 0.0^{+0.0}_{-0.0} & 0.0^{+0.0}_{-0.0} & 0.0\pm{0.0} & 31.0\\
			& DLG $+A$ & \mathbf{67.7^{+4.1}_{-3.7}} & 0.0^{+0.0}_{-0.0} & 0.0^{+0.0}_{-0.0} & 0.0\pm{0.0} & 29.9 && 67.7^{+4.0}_{-3.7} & 0.0^{+0.0}_{-0.0} & 0.0^{+0.0}_{-0.0} & 0.0\pm{0.0} & 27.7\\
			& TabLeak & \mathbf{67.7^{+3.9}_{-3.7}} & 0.0^{+0.0}_{-0.0} & 0.0^{+0.0}_{-0.0} & 0.0\pm{0.0} & 158.8 && 67.7^{+3.9}_{-3.8} & 0.0^{+0.0}_{-0.0} & 0.0^{+0.0}_{-0.0} & 0.0\pm{0.0} & 153.0\\
			& TabLeak $+A$ & \mathbf{67.7^{+4.0}_{-3.7}} & 0.0^{+0.0}_{-0.0} & 0.0^{+0.0}_{-0.0} & 0.0\pm{0.0} & 202.0 && 67.7^{+4.0}_{-3.7} & 0.0^{+0.0}_{-0.0} & 0.0^{+0.0}_{-0.0} & 0.0\pm{0.0} & 148.7\\
			
			\midrule
			\multirow{5}{*}{Pokec}
			&\tool&\mathbf{58.3^{+5.9}_{-5.9}} & \mathbf{50.7^{+7.9}_{-7.8}} & \mathbf{55.8^{+8.3}_{-7.9}} & \mathbf{15.0\pm{0.8}} & \mathbf{0.1} && \mathbf{97.2^{+1.6}_{-1.9}} & \mathbf{93.5^{+3.4}_{-4.2}} & \mathbf{96.3^{+1.9}_{-2.3}} & \mathbf{79.0\pm{1.8}} & \mathbf{0.2}\\
			& DLG&44.6^{+6.8}_{-6.2} & 11.3^{+16.0}_{-11.3} & 13.7^{+20.1}_{-13.7} & 0.0\pm{0.0} & 37.8 &&44.7^{+2.3}_{-2.3} & 2.2^{+3.1}_{-2.2} & 0.0^{+0.0}_{-0.0} & 0.0\pm{0.0} & 26.3\\
			& DLG $+A$&48.7^{+12.7}_{-8.6} & 39.1^{+18.7}_{-16.9} & 51.8^{+15.9}_{-17.8} & 1.0\pm{0.2} & 38.5 &&57.4^{+3.7}_{-3.9} & 69.5^{+3.6}_{-4.0} & 88.6^{+2.0}_{-2.1} & 0.0\pm{0.0} & 21.6\\
			& TabLeak&49.6^{+8.7}_{-6.6} & 8.2^{+12.0}_{-8.2} & 5.6^{+9.3}_{-5.6} & 0.0\pm{0.0} & 177.5 && 50.8^{+12.4}_{-8.9} & 13.9^{+13.5}_{-12.3} & 7.9^{+11.9}_{-7.9} & 0.0\pm{0.0} & 204.5\\
			& TabLeak $+A$&49.9^{+4.1}_{-4.3} & 38.1^{+5.8}_{-5.9} & 58.9^{+6.1}_{-6.6} & 0.0\pm{0.0} & 216.0&&52.6^{+3.3}_{-3.3} & 68.1^{+4.1}_{-3.9} & 82.7^{+4.0}_{-4.9} & 0.0\pm{0.0} & 254.5\\
			
			\bottomrule
		\end{tabular}
	}
	
\end{table*}
\paragraph{Effect of optimizer on baseline results}
In our main experiments we presented results for the CiteSeer and Pokec datasets with the baselines running an SGD optimizer. In \cref{table:main_vs_baseline_lbfgs}, we present results with the more stable LBFGS optimizer averaged across 10 reconstructions due to time limits. We see that the baselines show better performance, however, \tool still outperforms them and is less resource-consuming, requiring up to $100\times$ less runtime.

\begin{table*}[!hbt]\centering
	
	\caption{Results (in \%) of \tool and the baselines in cases of different model parameters. Here $L$ is the number of GCN layers and $d'$ is the model's width. $L=2, d'=300$ is the original setting. } \label{table:network_sizes_table}
	\renewcommand{\arraystretch}{1.2}
	
	\newcommand{\temp}[1]{\textcolor{red}{#1}}
	\newcommand{\noopcite}[1]{} 
	
	\newcommand{\skiplen}{0.01\linewidth} 
	\newcommand{\rlen}{0.01\linewidth} 
	\newcolumntype{R}{>{$}r<{$}}

		\begin{tabular}{@{}c l RRRRRR @{}} \toprule
			
			&& \multicolumn{1}{c}{\text{\metric{0}}} & \multicolumn{1}{c}{\text{\metric{1}}}  & \multicolumn{1}{c}{\text{\metric{2}}} & \multicolumn{1}{c}{\text{FULL}}  \\ 
			
			\midrule

			\multirow{5}{*}{\shortstack{$L = 2,$ \\ $d'=300$ \\ (default)}}

			& \tool & \mathbf{86.9^{+4.2}_{-5.7}} & \mathbf{83.9^{+5.2}_{-6.9}} & \mathbf{82.6^{+5.7}_{-7.4}} & \mathbf{68.0\pm1.7} \\ 
			
			& DLG & 31.8^{+4.5}_{-4.3} & 20.3^{+5.5}_{-4.8} & 22.8^{+6.6}_{-5.6} & 1.0\pm0.2\\
			
			& DLG $+A$ & 54.7^{+3.9}_{-4.2} & 60.1^{+4.6}_{-5.2} & 76.7^{+3.6}_{-4.8} &  1.0\pm0.2  \\
			
			& TabLeak & 25.1^{+5.1}_{-4.3} & 12.4^{+5.5}_{-4.3} & 10.8^{+5.6}_{-3.9} & 1.0\pm0.2 \\
			
			& TabLeak $+A$ & 55.6^{+3.9}_{-3.9} & 57.7^{+4.1}_{-4.6} & 73.8^{+2.8}_{-3.5} & 
			1.0\pm0.2 \\
			
			\midrule
			\multirow{5}{*}{\shortstack{$L = 3,$ \\ $d'=300$}}
			
			& \tool & \mathbf{82.5^{+5.7}_{-7.7}} & \mathbf{80.7^{+6.3}_{-7.7}} & \mathbf{80.4^{+6.2}_{-7.8}} &  \mathbf{63.0\pm1.6}
			\\ 
			& DLG & 20.3^{+4.3}_{-3.4} & 7.8^{+5.1}_{-3.3} & 8.2^{+5.3}_{-3.4} & 1.0\pm0.2  \\
			
			& DLG $+A$ & 43.0^{+3.7}_{-3.6} & 48.0^{+4.3}_{-4.5} & 66.0^{+3.7}_{-4.6} & 1.0\pm0.2
			\\
			& TabLeak & 16.5^{+3.8}_{-2.9} & 8.8^{+4.4}_{-3.1} & 8.0^{+4.3}_{-3.0} & 1.0\pm0.2
			\\
			& TabLeak $+A$ & 47.5^{+4.0}_{-4.2} & 48.1^{+4.8}_{-5.0} & 62.9^{+4.3}_{-4.4} & 1.0\pm0.2 \\
			
			\midrule
			\multirow{5}{*}{\shortstack{$L = 4,$ \\ $d'=300$}}

			& \tool & \mathbf{83.9^{+5.5}_{-7.4}} & \mathbf{82.8^{+5.9}_{-7.7}} & \mathbf{82.8^{+6.0}_{-7.9}} & \mathbf{64.0\pm1.6} \\ 
			
			& DLG & 14.1^{+3.8}_{-2.8} & 4.0^{+4.7}_{-2.2} & 4.8^{+4.9}_{-2.6} & 1.0\pm0.2 \\
			
			& DLG $+A$ & 39.1^{+3.7}_{-3.8} & 37.0^{+5.3}_{-5.4} & 55.6^{+5.0}_{-5.7} & 1.0\pm0.2 \\
			
			& TabLeak & 12.0^{+3.4}_{-1.9} & 2.1^{+4.3}_{-1.4} & 3.4^{+4.0}_{-1.7} & 1.0\pm0.2 \\
			
			& TabLeak $+A$ & 30.0^{+4.7}_{-4.0} & 27.3^{+5.9}_{-5.1} & 51.1^{+4.9}_{-5.3} & 1.0\pm0.2 \\
			
			\midrule
			\multirow{5}{*}{\shortstack{$L = 2,$ \\ $d'=200$}}
			
			& \tool & \mathbf{84.6^{+4.6}_{-6.4}} & \mathbf{81.4^{+5.8}_{-6.9}} & \mathbf{80.5^{+5.9}_{-7.2}} &  \mathbf{62.0\pm1.6}
			\\ 
			& DLG & 30.8^{+4.5}_{-4.1} & 18.9^{+5.8}_{-4.9} & 22.2^{+6.7}_{-5.4} &  1.0\pm0.2
			\\

			& DLG $+A$ & 50.3^{+4.2}_{-4.2} & 53.4^{+5.3}_{-5.9} & 68.7^{+4.9}_{-6.1} & 3.0\pm0.4

			\\
			& TabLeak & 22.1^{+4.8}_{-3.7} & 10.3^{+5.3}_{-3.6} & 8.9^{+5.5}_{-3.6} & 1.0\pm0.2
			\\

			& TabLeak $+A$ & 55.0^{+4.8}_{-5.0} & 62.1^{+4.9}_{-5.9} & 76.7^{+3.6}_{-4.7} & 1.0\pm0.2\\

			\midrule
			\multirow{5}{*}{\shortstack{$L = 2,$ \\ $d'=400$}}
			
			& \tool & \mathbf{85.2^{+4.6}_{-6.1}} & \mathbf{81.5^{+5.4}_{-7.1}} & \mathbf{80.1^{+6.1}_{-7.5}} &  \mathbf{63.0\pm1.6}
			\\ 
			& DLG & 35.1^{+4.9}_{-4.7} & 26.1^{+6.4}_{-5.6} & 25.0^{+6.9}_{-6.0} &  1.0\pm0.2
			\\

			& DLG $+A$ & 57.6^{+3.9}_{-4.3} & 61.7^{+4.7}_{-5.5} & 72.5^{+4.3}_{-5.5} &  2.0\pm0.3
			\\
			
			& TabLeak & 28.5^{+4.5}_{-4.0} & 17.1^{+5.4}_{-4.4} & 12.9^{+5.4}_{-4.0} &  1.0\pm0.2
			\\

			& TabLeak $+A$ & 61.7^{+3.6}_{-3.7} & 62.6^{+3.6}_{-4.4} & 76.3^{+2.9}_{-3.3} & 1.0\pm0.2
			\\

			\bottomrule
		\end{tabular}

\end{table*}
\paragraph{Effect of model parameters on reconstruction quality}
First, in \cref{table:network_sizes_table} and \cref{tab:dim_ablation} we demonstrate the performance of \tool under modifying the model parameters. We observe that neither changing in the number of layers nor the hidden dimension size of the GCN substantially affects the performance of \tool, while reaffirming the significant improvement over the baselines, even when they are given the graph connections as prior knowledge. We note that we only utilise the first 2 GCN layers even when $L>2$, showing the robustness of our method.

Additionally, we note that \tool is not significantly impacted by the embedding dimension $d^\prime$, as long as $n < d^\prime$, consequently achieving similar scores, particularly for small graphs. We show the exact results in \cref{tab:dim_ablation}.

\begin{table*}[!hbt]\centering
        
    \caption{Results (in \%) of \tool with different embedding dimensions across a range of graph sizes}
    \label{tab:dim_ablation}
    \newcommand{\twocol}[1]{\multicolumn{2}{c}{#1}}
    \newcommand{\threecol}[1]{\multicolumn{3}{c}{#1}}
    \newcommand{\fivecol}[1]{\multicolumn{5}{c}{#1}}
    \newcommand{\ninecol}[1]{\multicolumn{9}{c}{#1}}
    
    \newcommand{\bsz}{Batch Size~}
    \newcommand{\certified}{{CR(\%)}}
    
    \renewcommand{\arraystretch}{1.2}
    
    \newcommand{\ccellt}[2]{\colorbox{#1}{\makebox(20,8){{#2}}}}
    \newcommand{\ccellc}[2]{\colorbox{#1}{\makebox(8,8){{#2}}}}
    \newcommand{\ccells}[2]{\colorbox{#1}{\makebox(55,8){{#2}}}}
    
    \newcommand{\temp}[1]{\textcolor{red}{#1}}
    \newcommand{\noopcite}[1]{} 
    
    \newcommand{\skiplen}{0.01\linewidth} 
    \newcommand{\rlen}{0.01\linewidth} 
    \newcolumntype{R}{>{$}r<{$}}
    \resizebox{\linewidth}{!}{
        \begin{tabular}{@{}l RRR p{\skiplen}  RRR p{\skiplen} RRR @{}} \toprule
            & \threecol{$n\leq15$} && \threecol{$16\leq n\leq 25$} && \threecol{$26\leq n$}\\
            
            \cmidrule(l{5pt}r{5pt}){2-4} \cmidrule(l{5pt}r{5pt}){6-8} \cmidrule(l{5pt}r{5pt}){10-12} 
            
            & \multicolumn{1}{c}{\text{\metric{0}}} & \multicolumn{1}{c}{\text{\metric{2}}} & \multicolumn{1}{c}{\text{FULL}}
            && \multicolumn{1}{c}{\text{\metric{0}}} & \multicolumn{1}{c}{\text{\metric{2}}} & \multicolumn{1}{c}{\text{FULL}}
            && \multicolumn{1}{c}{\text{\metric{0}}} & \multicolumn{1}{c}{\text{\metric{2}}} & \multicolumn{1}{c}{\text{FULL}}\\ 
            \midrule			
            \multirow{1}{*}{$d=300$} 
            & \mathbf{93.0^{+3.4}_{-5.4}} & \mathbf{91.6^{+3.8}_{-6.3}} &  \mathbf{81.9\pm1.7} &&
            \mathbf{81.7^{+3.9}_{-4.8}} & \mathbf{74.8^{+5.8}_{-6.3}} &  \mathbf{43.6\pm1.1} &&
            50.1^{+6.8}_{-7.1} & 39.2^{+8.5}_{-7.7} &  \mathbf{5.1\pm0.6} \\
            \midrule
            
            \multirow{1}{*}{$d=128$} 
            & \mathbf{92.1^{+3.2}_{-5.0}} & \mathbf{92.3^{+3.9}_{-5.7}} &  \mathbf{79.3\pm{1.6}} &&
            \mathbf{81.4^{+4.0}_{-4.8}} & \mathbf{75.1^{+5.7}_{-6.6}} &  \mathbf{43.6\pm1.1} &&
            49.3^{+7.2}_{-6.5} & 38.8^{+8.7}_{-7.6} &  \mathbf{5.1\pm0.6} \\

            \multirow{1}{*}{$d=64$} 
            & \mathbf{92.2^{+3.0}_{-5.5}} & \mathbf{92.0^{+4.0}_{-5.9}} &  \mathbf{79.3\pm{1.6}} &&
            \mathbf{81.3^{+4.1}_{-4.7}} & \mathbf{75.5^{+5.8}_{-6.5}} &  \mathbf{43.6\pm1.1} &&
            48.6^{+7.4}_{-6.5} & 37.9^{+9.0}_{-7.7} &  \mathbf{5.1\pm0.6} \\

            \multirow{1}{*}{$d=32$} 
            & \mathbf{92.2^{+3.0}_{-5.5}} & \mathbf{91.7^{+3.6}_{-6.5}} &  \mathbf{79.3\pm{1.6}} &&
            \mathbf{81.7^{+4.0}_{-4.4}} & \mathbf{73.8\pm6.1} &  \mathbf{43.6\pm1.1} &&
            15.3^{+2.8}_{-4.4} & 13.3^{+2.5}_{-3.9} &  \mathbf{0.0\pm0.0} \\
        
            \bottomrule
        \end{tabular}
    }
    
\end{table*}

\paragraph{Effect of span check threshold on filtering capabilities}
We now investigate the effect of the choice for the $\tau$ threshold, used for filtering inputs using the span check method. We measure the ratio between the number of nodes and 1-hop building blocks that pass the filter, and the actual number of these blocks. We explore different values of $\tau$ in the range $[10^{-6}, 1]$, and evaluate this metric on 10 randomly chosen samples from the Tox21 dataset. We show in \cref{fig:tau_ablation} that any $\tau \in [10^{-4}, 10^{-2}]$ results in essentially the same filtering results, and that thresholds in this interval perfectly recover the correct 1-hop building blocks.

\begin{wrapfigure}{R}{0.4\textwidth}
	  \vspace{-10mm}
	\begin{minipage}{.4\textwidth}
        \begin{figure}[H]
            \centering
            \caption{Ablation study on the span check filtering threshold $\tau$.}
            \label{fig:tau_ablation}
            \includegraphics[width=\linewidth]{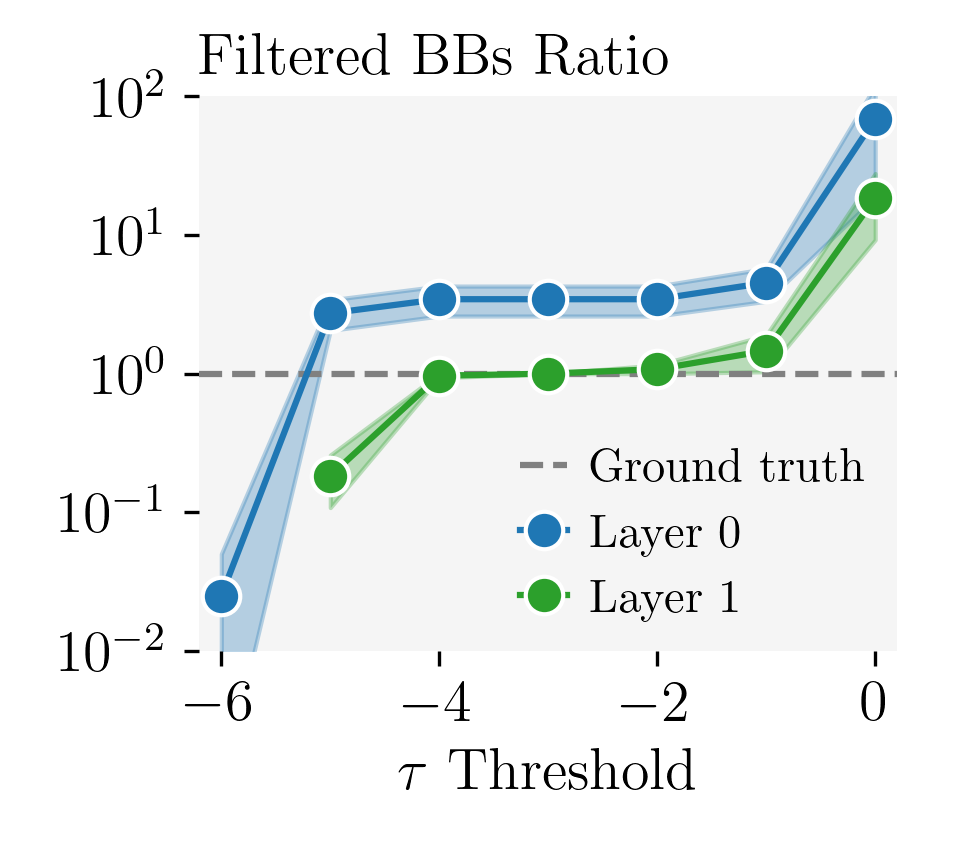}
        \end{figure}        
    \end{minipage}
     \vspace{-10mm}
\end{wrapfigure}
\paragraph{Adjacency matrix low-rankness effect on reconstructability}
Further, in \cref{fig:a_full_rankness} we looked into how the rank-deficiency of the adjacency matrix $A$ affects how much of the input \tool might be able to recover. For different sizes of $A$, we measure what the Monte-Carlo probability of $A$ being full-rank, and the fraction of nodes we can recover, as computed per \cref{thm:spancheck_ours}. This was done for synthetic graphs, where we sampled 100,000 symmetric binary matrices with varying probability of every 2 nodes being connected, as well as for all molecular graphs in the chemical datasets Clintox, Tox21 and BBBP. We show that \cref{thm:spancheck_ours} is crucial for understanding why \tool is effective, despite the probability of $A$ being full-rank being low. In particular, we highlight in \cref{fig:a_full_rankness} that \tool can recover an increasing fraction of nodes as $A$ grows.

\begin{table*}[t]\centering
	
	\caption{Results (in \%) of \tool tested on the Pokec social network dataset ~\citep{pokec}. 20 subgraphs were sampled for each of the size ranges 25-30, 30-40, 40-50 and 50-60} 
	\label{tab:pokec}
	\newcommand{\twocol}[1]{\multicolumn{2}{c}{#1}}
	\newcommand{\threecol}[1]{\multicolumn{3}{c}{#1}}
	\newcommand{\fivecol}[1]{\multicolumn{5}{c}{#1}}
	\newcommand{\ninecol}[1]{\multicolumn{9}{c}{#1}}
	
	\newcommand{\bsz}{Batch Size~}
	\newcommand{\certified}{{CR(\%)}}
	
	\renewcommand{\arraystretch}{1.2}
	
	\newcommand{\ccellt}[2]{\colorbox{#1}{\makebox(20,8){{#2}}}}
	\newcommand{\ccellc}[2]{\colorbox{#1}{\makebox(8,8){{#2}}}}
	\newcommand{\ccells}[2]{\colorbox{#1}{\makebox(55,8){{#2}}}}
	
	\newcommand{\temp}[1]{\textcolor{red}{#1}}
	\newcommand{\noopcite}[1]{} 
	
	\newcommand{\skiplen}{0.01\linewidth} 
	\newcommand{\rlen}{0.01\linewidth} 
	\newcolumntype{C}{>{\centering\arraybackslash$}c<{$}}

		\begin{tabular}{@{}lCCCCC @{}} \toprule
			
			\multicolumn{1}{c}{$n$} & 
            \multicolumn{1}{c}{\text{\metric{0}}} & \multicolumn{1}{c}{\text{\metric{1}}} & \multicolumn{1}{c}{\text{\metric{2}}} &  \multicolumn{1}{c}{\text{FULL}} & 
            \multicolumn{1}{c}{\text{Runtime[h]}} \\ 
			
			\midrule
			
			\multirow{1}{*}{25-30} 
			& 98.3^{+0.2}_{-0.4} & 95.1^{+0.5}_{-1.1} & 96.8^{+0.4}_{-0.9} & 17/20 & 0.17\\
			
			\multirow{1}{*}{30-40} 
			& 83.1^{+2.3}_{-3.4} & 61.6^{+3.1}_{-3.0} & 79.4^{+2.7}_{-3.6} & 5/20 & 0.46\\
			
			\multirow{1}{*}{40-50}
			& 69.3^{+3.2}_{-3.8} & 38.0^{+4.7}_{-4.3} & 59.2^{+3.7}_{-4.0} & 2/20 & 0.64\\			
			
			\multirow{1}{*}{50-60}
			& 32.7^{+4.8}_{-3.9} & 23.3^{+4.2}_{-3.5} & 41.2^{+4.6}_{-4.1} & 3/20 & 0.43\\
            
            \midrule

            \multirow{1}{*}{Total}
			& 70.9^{+6.2}_{-6.5} & 55.6\pm 7.2 & 69.2^{+6.4}_{-6.6} & 27/80 & 1.70\\

			\bottomrule
		\end{tabular}
	
\end{table*}

\newpage
\section{Limitations}
\label{sec:limitations}

\tool is the first algorithm to advance the field of gradient inversion for graph data, and we see significant potential for further development. However, our attack method currently assumes that the FL protocol includes node degree as a node feature. While this assumption holds in many GNN settings, relaxing it leads to reduced performance. Improving performance under relaxed assumptions is an important direction for future research.

Another key area for improvement is reducing the computational complexity of \tool, enabling it to scale to larger graphs and graphs with nodes of higher degree. We believe this to be a promising future direction, as there are many potential optimizations that could enhance the algorithm's efficiency. Particularly promising is using data priors that have the potential to severely reduce the number of span checks that need to be performed by efficiently filtering impossible subgraphs.

Additionally, \tool relies on the assumption that the GNN architecture satisfies \cref{assmp:dependence}. While this assumption holds for widely used GNNs like GCN and GAT, adapting our algorithm to support other GNN types is left for future work.

Moreover, as discussed in \cref{sec:technical}, \tool requires that $n < d'$ to maintain the low-rank nature of gradient updates. While this is a limitation, we believe it applies to many real-world scenarios, meaning that practical FL settings remain vulnerable to privacy risks.

Another assumption \tool depends on is that the weighted adjacency matrix $\mA$ is high-rank, with full reconstruction possible only if $\mA$ is full-rank. While this holds for GAT architectures, it is not always true for GCNs as presented in \cref{fig:a_full_rankness}. Relaxing the full-rankness assumption would be a crucial step toward better understanding privacy risks in FL for GNNs.

Finally, we leave the investigation of potential defenses against \tool as well as more complex federated learning protocols such as Federated Averaging for future work.

\fi

\end{document}